\newtheorem{theorem}{Theorem}[section]
\begin{document}

\renewcommand{\O}{\mathcal{O}}
\newcommand{\bh}{\boldsymbol{h}}
\newcommand{\bo}{\boldsymbol{o}}
\newcommand{\bp}{\boldsymbol{p}}
\newcommand{\bq}{\boldsymbol{q}}
\newcommand{\bs}{\boldsymbol{s}}
\newcommand{\bw}{\boldsymbol{w}}
\newcommand{\bx}{\boldsymbol{x}}
\newcommand{\by}{\boldsymbol{y}}
\newcommand{\bz}{\boldsymbol{z}}
\newcommand{\bphi}{\boldsymbol{\phi}}
\newcommand{\btheta}{\boldsymbol{\theta}}
\renewcommand{\vec}{\mathrm{\mathop{vec}}}

\twocolumn[
\icmltitle{Adaptive Sampled Softmax with Kernel Based Sampling}

\begin{icmlauthorlist}
\icmlauthor{Guy Blanc}{sta}
\icmlauthor{Steffen Rendle}{goo}
\end{icmlauthorlist}

\icmlaffiliation{sta}{Work done during internship at Google, Mountain View, USA}
\icmlaffiliation{goo}{Google, Mountain View, USA}

\icmlcorrespondingauthor{Guy Blanc}{guy.blanc@gmail.com}
\icmlcorrespondingauthor{Steffen Rendle}{srendle@google.com}

\icmlkeywords{softmax,sampled softmax,extreme classification}

\vskip 0.3in
]

\printAffiliationsAndNotice{}

\begin{abstract}
Softmax is the most commonly used output function for multiclass problems and is widely used in areas such as vision, natural language processing, and recommendation.
A softmax model has linear costs in the number of classes which makes it too expensive for many real-world problems.
A common approach to speed up training involves sampling only some of the classes at each training step.
It is known that this method is biased and that the bias increases the more the sampling distribution deviates from the output distribution.
Nevertheless, almost all recent work uses simple sampling distributions that require a large sample size to mitigate the bias.
In this work, we propose a new class of kernel based sampling methods and develop an efficient sampling algorithm.
Kernel based sampling adapts to the model as it is trained, thus resulting in low bias.
It can also be easily applied to many models because it relies only on the model's last hidden layer.
We empirically study the trade-off of bias, sampling distribution and sample size and show that kernel based sampling results in low bias with few samples.
\end{abstract}

\section{Introduction}
Classification problems with a large number of classes are common in many language tasks~\cite{mikolov:nips13,bengio:08} and recommender systems~\cite{covington:16}.
A standard and effective approach to these classification tasks is to use some model, such as a neural network, to compute a logit for each class, and assume that the class probabilities are a softmax of the logits.
Computing class probabilities with softmax involves a normalization step where a \emph{partition function} over the logits of all classes is computed.
For learning the model parameters, an optimization algorithm, e.g.,  stochastic gradient descent, needs to compute the gradients with respect to the loss.
When the number of classes, $n$, is large, computing the probability of each class is often too slow, as the time for each training step grows linearly with $n$.
Sampled softmax, which creates a sample of $m<n$ classes in every update step, is commonly used when the number of classes becomes too large.
It is well known that sampled softmax is biased \cite{bengio:08}, i.e., it does not converge to the same loss as a full softmax -- no matter how many update steps are taken.
The only way to eliminate the bias is to sample from the softmax distribution which is not efficient.
For any other sampling distribution, there are two directions to mitigate the bias: (i)~choose a sampling distribution that is closer to softmax, or (ii)~increase the sample size, $m$ -- which is trivial but costly.
Early work \cite{bengio:08} has shown that a good sampling distribution should be adaptive and should depend on the model's output.

While the importance of the sampling distribution is known, surprisingly, almost all recent applications use simple sampling distributions such as uniform or global popularity, which require large sample sizes to achieve an acceptable bias.
One reason for this trend could be that the models have tended to get more complex, e.g. stacked LSTMs, very deep networks, convolutional NN, etc. which makes it hard to design an efficient sampling distribution that adapts to the model.

In this work, we propose a new class of sampling distributions that approximate softmax but are efficient to compute.
The proposed sampling distributions are defined over the model's output, making them adaptive to the input, the model's structure, and the current model parameters.
The main idea is to sample proportionally to a non-negative kernel.
We show that kernels allow us to compute the partition function efficiently in the kernel space.
This result can be used in a divide and conquer algorithm that samples in $\O(D\,\log\,n)$ time, where $D$ is the dimension of the kernel space.
We suggest the quadratic kernel as an approximation for (absolute) softmax.
See Section~\ref{sec:quadratic_kernel} for details.
Kernel based sampling is generic and can be applied directly to any model where the final layer is a dot product between a hidden layer and class embeddings. 

We study the bias of uniform, quadratic kernel and softmax sampling empirically and show that the quadratic kernel needs one to two orders of magnitude less samples than uniform to reach the same quality as full softmax.
A second observation is that once the bias is eliminated, more samples usually do not increase the convergence speed.

\section{Modelling Large Multiclass Problems}

In this section, we first formalize the multiclass softmax and then recap its sampling version.

Let $\by \in [0,1]^n$ with $\sum_{i=1}^n y_i = 1$ be a distribution over $n$ classes for an input $\bx \in \mathcal{X}$.
The goal of supervised learning is to find a function that explains a set of observed pairs $(\bx, \by)$ of input $\bx$ and label $\by$.
Let $\bo : \mathcal{X} \times \Theta \rightarrow \mathbb{R}^n$ be such a function that maps an input $\bx$ to a raw score for each class.
The model function $\bo$ is parameterized by model parameters $\btheta \in \Theta$.
To shorten notation, we drop the arguments $\bx$ and $\btheta$ from $\bo$ and all derived functions, whenever the dependency is clear.

\subsection{Full Softmax}

A \emph{softmax} model links the model outputs $\bo$ to a class probability distribution $\bp \in [0,1]^n$ with $\sum_{i=1}^n p_i = 1$ by applying an exponential function
\begin{align}
    \label{softmax probability}
    p_i := \frac{\exp(o_i)}{\sum_{j=1}^n\exp(o_j)} 
\end{align}
The denominator of $p_i$ is also known as the \emph{partition function} and takes at least $\O(n)$ time to compute.
For softmax, the output $\bo$ is often referred to as the \emph{logits}.
The loss $L$ of a parameter setting $\btheta$ is measured by the cross entropy between $\by$ and $\bp$
\begin{align*}
    L(\by, \bp) := -\sum_{i=1}^n y_i \log p_i = \log \sum_{i=1}^n \exp(o_i) - \sum_{i=1}^n y_i\, o_i
\end{align*}
This \emph{full} softmax loss depends on \emph{all} classes.
Thus, learning a full softmax is expensive when the number of classes, $n$, is large.

\subsection{Sampled Softmax}

Sampled softmax aims to approximate a full softmax during model training \cite{bengio:08,bengio:03}.
Rather than computing the loss over all classes, only the positive class and a sample of $m$ negative classes are considered.
Each negative class is sampled with probability $q_i$ with replacement.
For the rest of the paper, we assume w.l.o.g. that there is one positive class per training example, i.e., $\by \in \{0,1\}^n$.
The vector $\bs \in \{1,\ldots,n\}^{m+1}$ represents a sample of classes and stores the index of the positive and the index of the $m$ sampled negative classes.
For instance, $\bs = (2,6,7,6,3)$, represents a sample of size $m=4$ with the positive class at index $2$ and four negative classes, where the class at index $6$ was sampled twice, and the classes at index $7$ and $3$ once each. 

Just as $\bo$, $\by$, and $\bp$ with cardinality $n$ refer to important characteristics of all the classes, $\bo'$, $\by'$, and $\bp'$ with cardinality $m+1$ reflect similar values for a sample $\bs$.
First, each index $i \in \{1,\ldots, m+1\}$ of the sample $\bs$ is assigned an adjusted logit $o'_i$.
\begin{align}
    \label{softmax correction}
    o'_i := \left\{
    \begin{array}{ll}
      o_{s_i} - \ln(m\, q_{s_i}) & \text{if } y_{s_i} = 0 \\
      o_{s_i} - \ln(1) = o_{s_i} & \text{else}
    \end{array}
  \right.
\end{align}
The adjusted logit corrects the true logit $o_{s_i}$ by the expected number of occurences of a class $s_i$ in the sample $\bs$.
This correction ensures that in the limit of $m \to \infty$, sampled softmax is unbiased \cite{bengio:08}.

Second, $\bp'$ is the softmax probability distribution computed over adjusted logits $\bo'$, and $\by'$ is a projection of the original labels $\by$ to the sample $\bs$.
\begin{align}
    \label{softmax sample probability}
    p'_i := \frac{\exp(o'_{i})}{\sum_{j=1}^{m+1}\exp(o'_{j})},\quad
    y'_i := y_{s_i}
\end{align}
The loss of a sample $\bs$ is the cross entropy $L(\by', \bp')$ between predicted probabilities $\bp'$ and labels $\by'$.
In contrast to full softmax, the loss of sampled softmax depends only on (at most) $m+1$ different classes.

\subsection{Importance of the Sampling Distribution}

\label{sec:unbiased}
Sampled softmax can be viewed as an algorithm that generates an estimator for the full softmax gradient with respect to the logits.
The full softmax gradient with respect to a logit $o_i$ is
\begin{align}
    \label{softmax gradient}
    \frac{\partial L(\bp,\by)}{\partial o_i} =  p_i - y_i
\end{align}
whereas the sampled softmax gradient with respect to an original logit $o_i$ reads
\begin{align}
    \label{sample softmax gradient}
    \frac{\partial L(\bp',\by')}{\partial o_i} &= \sum_ {j = 1}^{m+1} I(s_j = i) (p'_j - y'_j) \\
    &=  \sum_{j=1}^{m+1} I(s_j = i) p'_j - y_i \notag
\end{align}
Ideally, we would like to pick a sampling distribution such that sampled softmax converges to the same value as full softmax.
At the very least, we would like to guarantee convergence with infinitely small step size and infinitely many steps.
That is guaranteed if the sampled softmax estimator is unbiased:
\begin{align}
    \label{eq:exgradient}
     &E\left[\frac{\partial L(\bp',\by')}{\partial o_i}\right] \stackrel{?}{=} \frac{\partial L(\bp,\by)}{\partial o_i} \\
     \Leftrightarrow &E\left[\sum_{j=1}^{m+1} I(s_j = i) p'_j\right] \stackrel{?}{=} p_i
\end{align}
\citet{bengio:08} have shown that sampling proportional to the softmax probability, $q_i = p_i \propto \exp(o_i)$, is an unbiased estimator.
In fact, $q_i = p_i \propto \exp(o_i)$ is the \emph{only} unbiased estimator.
\begin{theorem}
\label{th:unbiased}
  The gradient of sample softmax is an unbiased estimator of the full softmax gradient iff $q_i = p_i \propto \exp(o_i)$.
\end{theorem}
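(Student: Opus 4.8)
The plan is to work directly with the unbiasedness condition in the form already derived in \eqref{eq:exgradient}, namely that for every class $i$ and every input,
\[
  E\!\left[\sum_{j=1}^{m+1} I(s_j = i)\, p'_j\right] = p_i .
\]
The ``if'' direction is essentially the computation of \citet{bengio:08}: substituting $q_i \propto \exp(o_i)$ into the adjusted logits \eqref{softmax correction} makes $\exp(o'_i)$ proportional to $\exp(o_{s_i})/q_{s_i}$ for the negatives (and the positive slots in the sample, after accounting for how $\bs$ is drawn, can be folded in via the same identity), so that the expectation telescopes back to $p_i$. I would sketch this briefly and then spend the effort on the ``only if'' direction, which is the real content of the theorem.

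For the converse, the strategy is to fix the model outputs $\bo$ (hence the softmax probabilities $\bp$) and ask: for which sampling distributions $\bq$ does the identity above hold as an \emph{identity in} $\bo$ — or at least for all $\bo$ in an open set, which is what is needed for the estimator to be unbiased throughout training? The key simplification is to take the sample size $m=1$ (one positive, one negative): it suffices to exhibit a single case where the estimator is biased, and the smallest sample is the cleanest. With $m=1$ the sample is $\bs=(c^+, c^-)$ where $c^+$ is the positive class and $c^-\sim \bq$ (restricted appropriately, or over all $n$ classes depending on the paper's convention), the two adjusted logits are $o_{c^+}$ and $o_{c^-} - \ln q_{c^-}$, and $p'_1, p'_2$ are an explicit $2$-point softmax. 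I would then write out $E[\sum_j I(s_j=i) p'_j]$ as an explicit finite sum over the choice of $c^-$ and set it equal to $p_i = \exp(o_i)/Z$ with $Z=\sum_j \exp(o_j)$.

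The main obstacle — and the step to carry out carefully — is extracting from this functional equation the conclusion $q_i \propto \exp(o_i)$. The cleanest route is a perturbation/limiting argument: because the equation must hold for all logit vectors, one can drive one logit $o_k \to +\infty$ (or $-\infty$) while holding the others fixed, or differentiate both sides with respect to some $o_k$ at a symmetric point such as $\bo = \bm 0$, where $p_i = 1/n$ and the $2$-point softmax terms become elementary functions of $q$. At the symmetric point the system of equations obtained by ranging $i$ over all classes forces all the $q_i$ to be equal, and then a first-order perturbation $o_k \to \epsilon$ shows that the only way to preserve the identity to first order is $q_i \propto \exp(o_i)$; equivalently, one shows any $\bq$ not of this form makes the $i=k$ equation fail for small $\epsilon$. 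I would present the symmetric-point computation in full since it is short, note that the $m=1$ obstruction immediately implies bias for the estimator at general $m$ (the per-step estimator is still biased, even if averaging more samples shrinks the variance), and remark that the ``for all $\bo$'' quantifier is exactly what distinguishes this from the weaker asymptotic unbiasedness of \eqref{softmax correction} as $m\to\infty$.
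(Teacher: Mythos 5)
Your ``if'' direction is fine in outline and matches the standard computation (the key point, which you should make explicit, is that under softmax sampling $\exp(o'_k)=\exp(o_{s_k})/(m\,q_{s_k})=\frac{1}{m}\sum_{l}\exp(o_l)$ is the \emph{same constant for every sampled class}, so the denominator of $p'_j$ is deterministic and the expectation factors). The ``only if'' direction, however, has two genuine gaps. First, the reduction to $m=1$ does not carry over to general $m$: the sampled-softmax estimator for sample size $m$ normalizes the $m+1$ adjusted logits \emph{jointly}, so it is not an average of $m$ independent single-negative estimators, and bias at $m=1$ does not formally imply bias at $m=2,3,\dots$. Indeed the bias genuinely depends on $m$ --- the estimator is asymptotically unbiased as $m\to\infty$ for any full-support $\bq$ --- so your parenthetical ``the per-step estimator is still biased'' asserts exactly what needs to be proved, separately for each finite $m$. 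Second, your perturbation/symmetric-point argument only rules out distributions $\bq$ for which unbiasedness holds \emph{identically in $\bo$ on an open set}; the theorem is a pointwise statement: for each fixed $\bo$, the estimator is unbiased iff $q_i\propto\exp(o_i)$ for that $\bo$. Since $\bq$ is allowed (indeed intended) to depend on $\bo$, you cannot differentiate the unbiasedness identity in $\bo$ while holding $\bq$ fixed without weakening the conclusion.

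The paper's proof avoids both problems with a single observation you are missing: look at the \emph{positive} class. Writing $p'_1=\exp(o_1)/(\exp(o_1)+X)$ with $X=\sum_{k=2}^{m+1}\exp(o'_k)$, one has $E[X]=\sum_{k= 2}^{n}\exp(o_k)$ for \emph{any} sampling distribution by construction of the logit correction, and $x\mapsto \exp(o_1)/(\exp(o_1)+x)$ is strictly convex, so Jensen's inequality gives $E[p'_1]\ge p_1$ with equality iff $X$ is almost surely constant, i.e.\ iff $\exp(o_j)/(m\,q_j)$ is the same for all sampled $j$, i.e.\ iff $q_j\propto\exp(o_j)$. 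This works for every fixed $\bo$ and every $m$ simultaneously and requires no limiting or perturbation argument. If you want to keep your explicit-computation route, you would at minimum need to (a) carry it out for arbitrary $m$, and (b) restate the theorem with a ``for all $\bo$'' quantifier, which is strictly weaker than what is claimed.
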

We include a detailed proof in the appendix.

\subsection{Properties of a Good Sampling Distribution}
\label{sec:properties_of_good_distribution}

The last sections argued that sampled softmax is biased and the only way to mitigate the bias are (1)~choose a sampling distribution $q_i$ closer to softmax $p_i$ or (2)~increase the sample size, $m$.
The closer the sampling distribution $q_i$ reflects $p_i$, the smaller the sampling size that is needed for low bias.
Finally, we highlight three properties of the softmax distribution, $p_i \propto o_i(\bx,\theta)$, that a good sampling distribution, $\bq$, should meet as well.
\begin{enumerate}
        \item \emph{Example dependent}: Every input, $\bx$, has an individual sampling distribution, because the output, $\bo(\bx)$, depends on the input, $\bx$.
        \item \emph{Model structure dependent}: The sampling depends on the functional structure of $\bo$.
        For instance, if $\bo$ is an LSTM, the sampling distribution should not be represented by simple bigrams.
        \item \emph{Model parameter dependent}: The sampling distribution changes while the model is learned, because $\bo$ depends on the model parameters.
\end{enumerate}
Common sampling schemes such as uniform or popularity based sampling are neither example nor model dependent.
In the following section, we introduce a sampling algorithm that meets these criteria and is efficient.

\section{Kernel Based Sampling}

Sampling directly from $q_i \propto \exp(o_i)$ requires computing the partition function and is as expensive as computing the full softmax.
The motivation for sampling is to avoid that inefficiency, so sampling from $q_i \propto \exp(o_i)$ is not a good option.
In this section, we propose efficient sampling distributions that depend on the example $\bx$, the model structure $\bo$ and the model parameter $\theta$ as highlighted in Section~\ref{sec:properties_of_good_distribution}.

So far, we have ignored how the logits $\bo$ are computed.
In the following, we assume that $o_i$ is a dot product between a context or query embedding, $\bh \in \mathbb{R}^d$, and a class embedding, $\bw_i \in \mathbb{R}^d$.
This type of model is extremely common with many examples such as deep neural networks and factorization models.
For example, $\bh$ could be the last hidden layer of a deep neural network and $\boldsymbol{W} \in \mathbb{R}^{n \times d}$ the last matrix of weights, such that $\bo = \boldsymbol{W}^T\bh$.
The cost of computing the full softmax on a dot product model is $\O(nd)$.

\subsection{Kernel Based Distributions}
\label{supported distributions}

We consider sampling distributions that are proportional to some function $K : \mathbb{R}^d \times \mathbb{R}^d \to \mathbb{R}^+$.
We assume that $K$ is a kernel function for a $D$ dimensional space, i.e., there exists a mapping $\bphi: \mathbb{R}^d \to \mathbb{R}^D$ such that $K(\boldsymbol{a},\boldsymbol{b}) = \langle \bphi(\boldsymbol{a}), \bphi(\boldsymbol{b}) \rangle$.
Thus, the sampling distribution can be written as:
\begin{align}
     q_i= \frac{K(\bh, \bw_i)}{\sum_{j=1}^n K(\bh, \bw_j)}
        = \frac{K(\bh, \bw_i)}{\left\langle \bphi(\bh), \underbrace{\sum_{j=1}^n \bphi(\bw_j)}_{=:\bz \in \mathbb{R}^D} \right\rangle} 
    \label{eq:kernel_distribution}
\end{align}
The last step shows the key property that we gain from a kernel: the summation over all classes can be isolated from the query $\bh$ -- i.e., the partition function becomes a simple dot product between a query vector and a summary vector $\bz$.
This summary vector is independent of the query and can be precomputed.

\subsection{Sampling with Divide and Conquer}
\label{sampling algorithm}

\begin{figure*}[ht]
    \centering
    \subfigure[sampling a class]{\label{fig:sampling_path}\includegraphics[height=0.57\columnwidth]{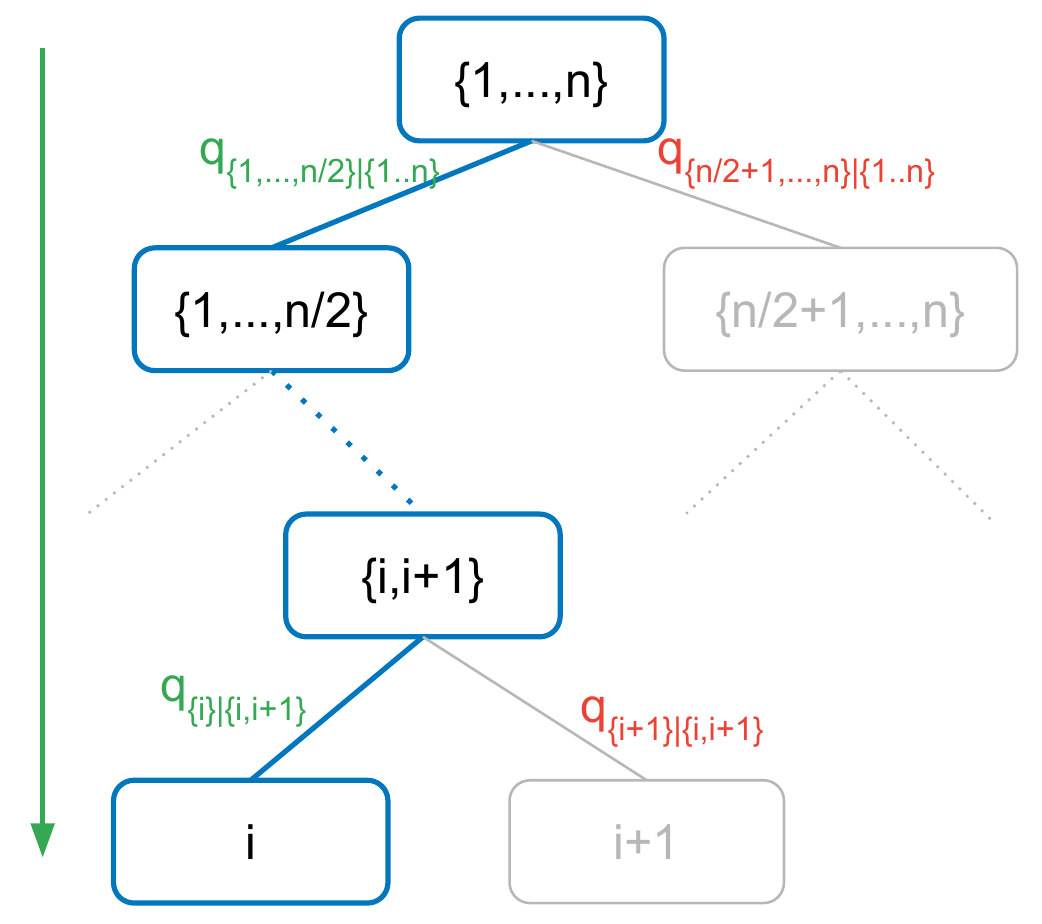}}
    \subfigure[updating statistics]{\label{fig:update}\includegraphics[height=0.57\columnwidth]{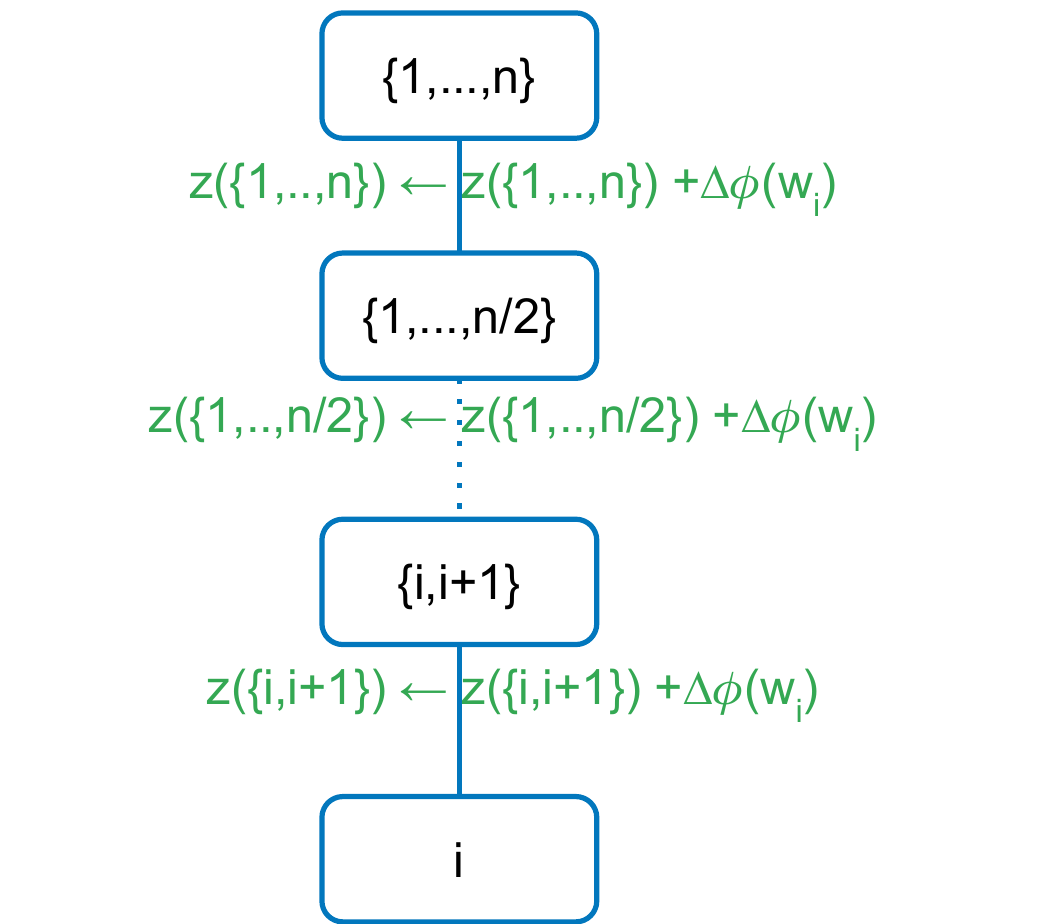}}
    \subfigure[large branching factor for leaves]{\label{fig:sampling_path_large}\includegraphics[height=0.57\columnwidth]{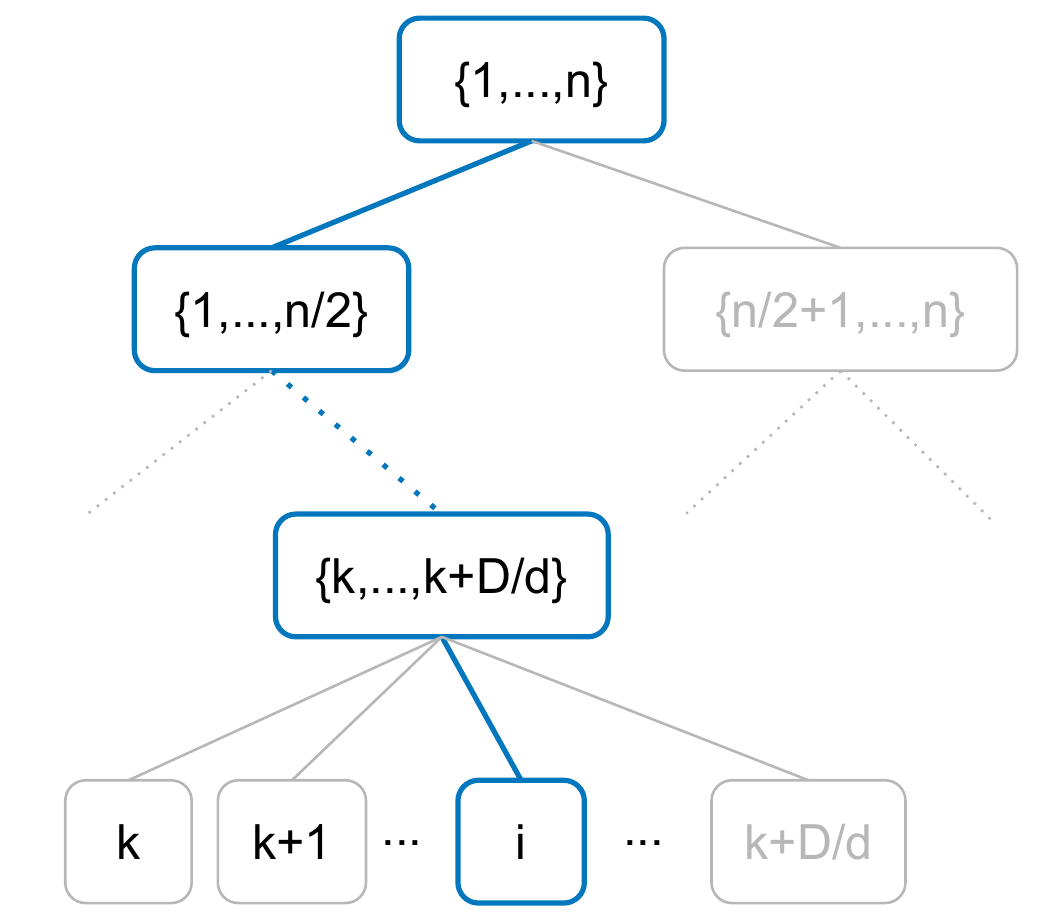}}
     \caption{Divide and conquer algorithm for sampling from a kernel distribution $q$.
     Figure \ref{fig:sampling_path} shows how to sample subsets starting from all classes $\{1,\ldots,n\}$ until a single item $i$ is reached.
     After the class embedding, $\bw_i$, of class $i$ changes from $\bw^{\text{old}}_i$ to $\bw^{\text{new}}_i$, all statistics, $\bz$, on the sampling path of $i$ are updated by $\Delta\phi(\bw_i) := \phi(\bw^{\text{new}}_i) -\phi(\bw^{\text{old}}_i)$ (Figure \ref{fig:update}).
     To minimize storage costs for statistics $\bz$, it is beneficial to use a higher branching factor of $\O(\frac{D}{d})$ for the leaves (Figure \ref{fig:sampling_path_large}).}
    \label{fig:kernel_sampling}
\end{figure*}

The kernel gives the ability to compute the probability of \emph{one} class efficiently.
Next, we discuss how this property can be used for efficient sampling from \emph{all} classes.
Instead of sampling a class directly from all the possible classes, we sample a subset of classes recursively until the subset has only one class (see Figure~\ref{fig:sampling_path}).
To formalize this algorithm, we introduce $C \subseteq \{1, \ldots,n\}$ as a set of classes and define $\bz(C) := \sum_{j \in C} \bphi(\bw_j)$.
Let $C' \cup C'' = C$ be a partition of $C$ into two disjoint sets $C'$ and $C'' = C \setminus C'$.
We define the probability, $q_{C'|C}$, of sampling the set $C'$ from $C$, as the sum of the probabilities of its elements:
\begin{align}
    q_{C'|C} &:= \sum_{j \in C'} \frac{K(\bh, \bw_j)}{\sum_{l \in C}  K(\bh,\bw_l)} \\
             &= \frac{\langle \phi(\bh), \sum_{j \in C'} \phi(\bw_j)\rangle}{\langle \phi(\bh), \sum_{l \in C} \phi(\bw_l) \rangle} 
             = \frac{\langle \phi(\bh), \bz(C')\rangle}{\langle \phi(\bh), \bz(C) \rangle} \notag 
\end{align}
If we know $\bz$ for $C$ and $C'$, we can sample from this distribution in $\O(D)$ time.
This scheme can be applied recursively to the sampled subset until the subset contains exactly one class.
With $n$ classes and two sets of equal size at each step, this takes $\log_2 n$ steps and in total the time for sampling a class proportional to $\bq$ is $\O(D\,\log_2 n)$.

\subsubsection{Analysis}
\paragraph{Correctness}
The correctness of the divide and conquer algorithm, i.e., that it samples proportional to the kernel distribution (eq.~\ref{eq:kernel_distribution}), is easy to show.
Assume the algorithm samples class $i$ and the intermediate sets were $C_1$, $C_2$, \ldots, $C_{\log n - 1}$.
The probability for sampling class $i$ with the divide and conquer algorithm is equal to $q_i$:
\begin{align*}
     & q_{C_1 | \{1,\ldots,n\}} \, q_{C_2|C_1} \, \ldots \, q_{\{i\}|C_{\log n - 1}} \\
    =&  \frac{\cancel{\langle \phi(\bh), \bz(C_1)\rangle}}{\langle \phi(\bh), \bz(\{1,\ldots,n\}) \rangle}
        \frac{\cancel{\langle \phi(\bh), \bz(C_2)\rangle}}{\cancel{\langle \phi(\bh), \bz(C_1) \rangle}}
        \ldots
        \frac{\langle \phi(\bh), \phi(\bw_i) \rangle}{\cancel{\langle \phi(\bh), \bz_{\log n - 1} \rangle}} \\
    =& \frac{K(\bh, \bw_i)}{\langle \phi(\bh), \bz \rangle}
    = q_i \qed
\end{align*}

\paragraph{Runtime}
The divide and conquer algorithm assumes that $\bz(C)$ is known for every set that is involved in sampling.
As sampling is independent of the particular choice of the splits, we can choose any arbitrary (binary and balanced) split and keep it fixed.
In total, there are $n$ many sets that are arranged in a tree like structure and each class appears in exactly $\log_2 n$ many sets.
This allows to precompute $\bz(C)$ for any of the $n$ sets.
If we update an embedding, $\bw_i$ during training, we can also update all sets in which $i$ appears in, in time $O(D \log n)$ by updating $\bz(C)$ for every node along the path from the root to that embedding.
Figure~\ref{fig:update} illustrates the update process.

\subsubsection{Practical Considerations}

\label{practical considerations}
\paragraph{Less Memory}
The structure described so far has $\O(n)$ nodes in total, each of which must store $\O(D)$ information for $\bz$.
This means $\O(nD)$ space is required to store it.
Here we will describe how to reduce that to $\O(nd)$ space while maintaining fast sampling and updating.

Instead of splitting sets until they reach the trivial size $1$, we suggest to stop splitting as soon as the size of a set is $\O(\frac{D}{d})$.
This leads to the tree having a total of $\O(\frac{nd}{D})$ sets, and requires $O(nd)$ memory.
Figure \ref{fig:sampling_path_large} sketches the sampling process with a larger branching factor for the leaves.
Increasing the branching factor seems very costly for the final step because the algorithm has to sample from a set of $\O(\frac{D}{d})$ many classes.
However, for most kernels, $K(\boldsymbol{a},\boldsymbol{b})$ can be computed efficiently in $\O(d)$ time, e.g., for kernels of the form $K(\boldsymbol{a},\boldsymbol{b}) = f(\langle \boldsymbol{a}, \boldsymbol{b} \rangle)$.
Thus, performing the last step in the original space takes $\O(d\frac{D}{d})=\O(D)$ time even with a naive implementation.
The proposed modification decreases the height of the tree from $\O(\log_2 n)$ to $\O(\log_2 \frac{nd}{D})$, and adds a final step to sampling with time $\O(D)$.
The total sampling time is thus $\O(D(1 + \log_2 \frac{nd}{D})$ which is still $\O(D \log_2 n)$.

\paragraph{Multiple Partial Samples}

Usually, we want to sample several negatives from $\bq$.
Instead of applying the divide and conquer algorithm $m$ times, a single run could return all the $\frac{D}{d}$ leaf nodes.
This would require an additional correction in sampled softmax to accept a weight on each sample.
Then, instead of $q_i$ being the probability of sampling a particular class, it is the probability of sampling a class multiplied by the weight given to that class when it is sampled.
The drawback of this approach is that the samples are not independent and likely more total samples would be needed.
We do not further investigate this approach, but in some applications, faster sampling might justify the cost of requiring a few more samples.

\subsection{Quadratic Kernel}
\label{sec:quadratic_kernel}

One obvious choice for a kernel is a quadratic function $K(\bh, \bw_i) = \alpha \langle \bh, \bw_i \rangle^2 + 1$.
This function is conveniently always positive.
Its feature representation is
\begin{align}
\phi(\boldsymbol{a}) = \left[\sqrt{\alpha}\,\vec(\boldsymbol{a} \otimes \boldsymbol{a}), 1\right] 
\end{align}
with $D = O(d^2)$, allowing for $O(d^2 \log n)$ sampling.
It is also a reasonably good approximation of $\exp$ near the origin, where many logits tend to be.
However, a quadratic function is a poor approximation for negative logits and would oversample classes with negative logits.
To align the sampling distribution $\bq$ better with the prediction distribution $\bp$, we suggest a modification of the softmax probability, $\bp$, in eq.~(\ref{softmax probability}) to an \emph{absolute softmax}
\begin{align}
    \label{softmax abs probability}
    p_i = \frac{\exp(|o_i|)}{\sum_{j=1}^n\exp(|o_j|)} 
\end{align}
This modified prediction distribution does not negatively impact the expressiveness because softmax is shift invariant, i.e., $q_i \propto \exp(o_i) \propto \exp(o_i)\,\exp(c) = \exp(o_i + c)$ for any constant $c \in \mathbb{R}$.
In particular, any \emph{softmax} solution has a corresponding \emph{absolute softmax} solution by shifting the logits, $\bo$, of the softmax solution by any $c$ large enough to make all the logits nonnegative.
We investigated also empirically the quality of \emph{softmax} and \emph{absolute softmax} as prediction distribution when learning without sampling, i.e., full softmax, and both performed very similarly\footnote{Similar empirical findings were obtained by \citet{spherical:15} on various tasks.} on the datasets of Section~\ref{sec:datasets}.
Finally, analogous to Section~\ref{sec:unbiased}, for absolute softmax as the prediction distribution, the only unbiased sampling distribution is absolute softmax.
This follows directly from Theorem~\ref{th:unbiased} in the appendix, because the analysis was shown for $p_i \propto(o_i)$ and \emph{any} output $o_i$, so it also holds for the modified output $|o_i|$.
Therefore, we suggest to use an absolute softmax as prediction distribution when sampling from a symmetric kernel like the quadratic kernel and a standard softmax in other cases.

Another way to look at absolute softmax is to add an additional layer to $\bo$ that performs $|\bo|$ and then passing the result to a standard softmax.

\section{Experiments}

In this section, we empirically investigate the trade-off between bias, sampling distribution, and number of samples. 

\subsection{Experimental Setup}

\begin{figure*}[ht]
    \includegraphics[width=0.66 \columnwidth]{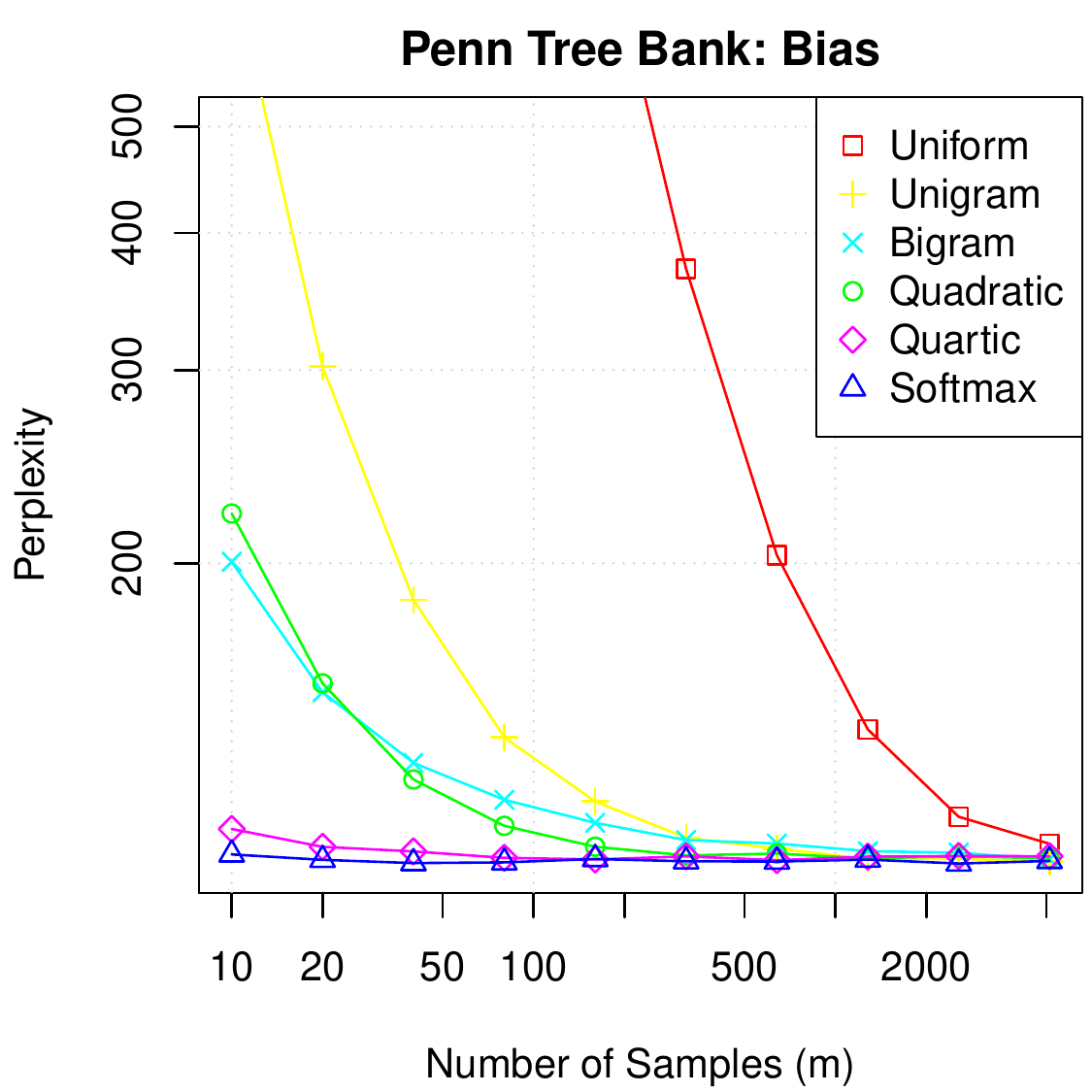}
    \includegraphics[width=0.66\columnwidth]{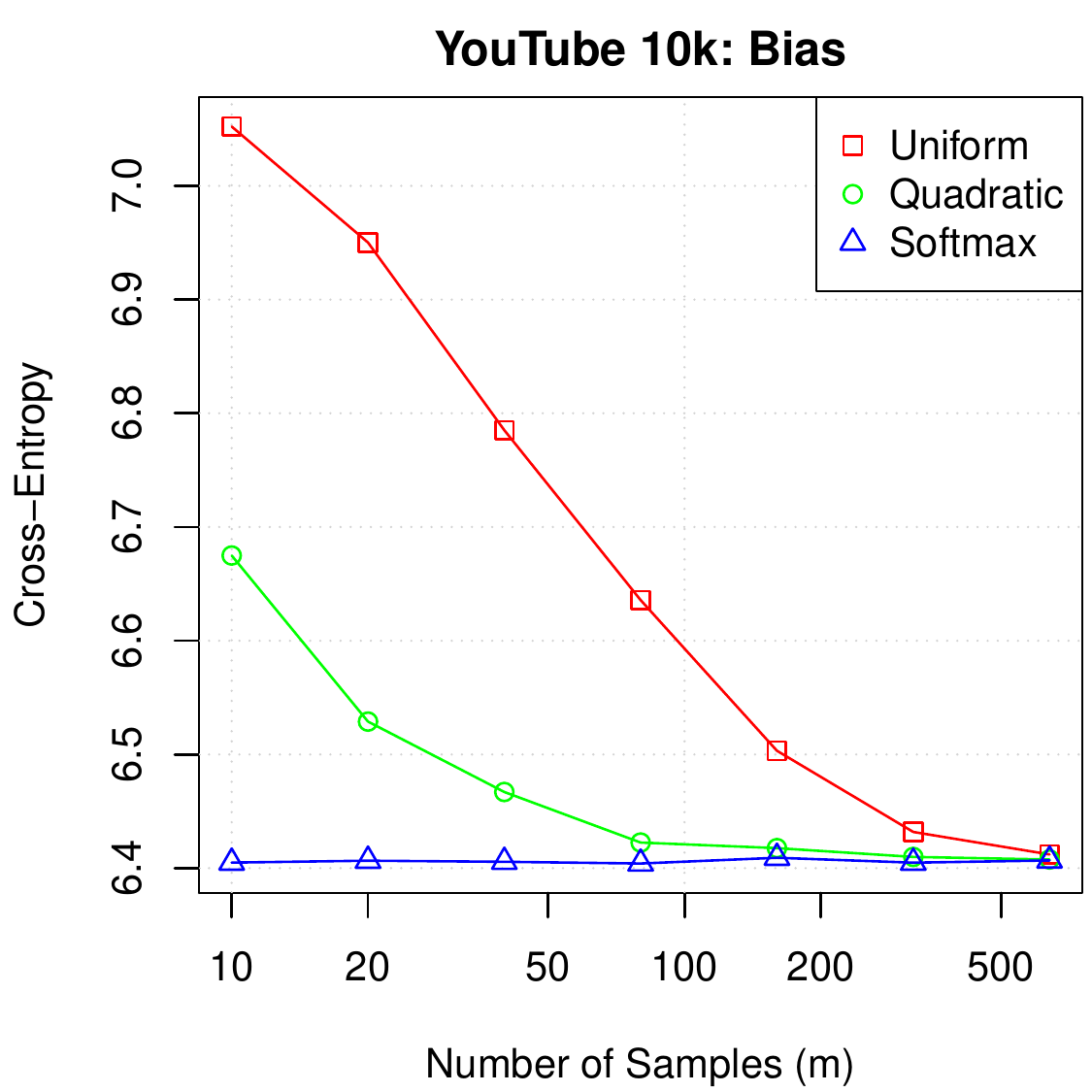}
    \includegraphics[width=0.66\columnwidth]{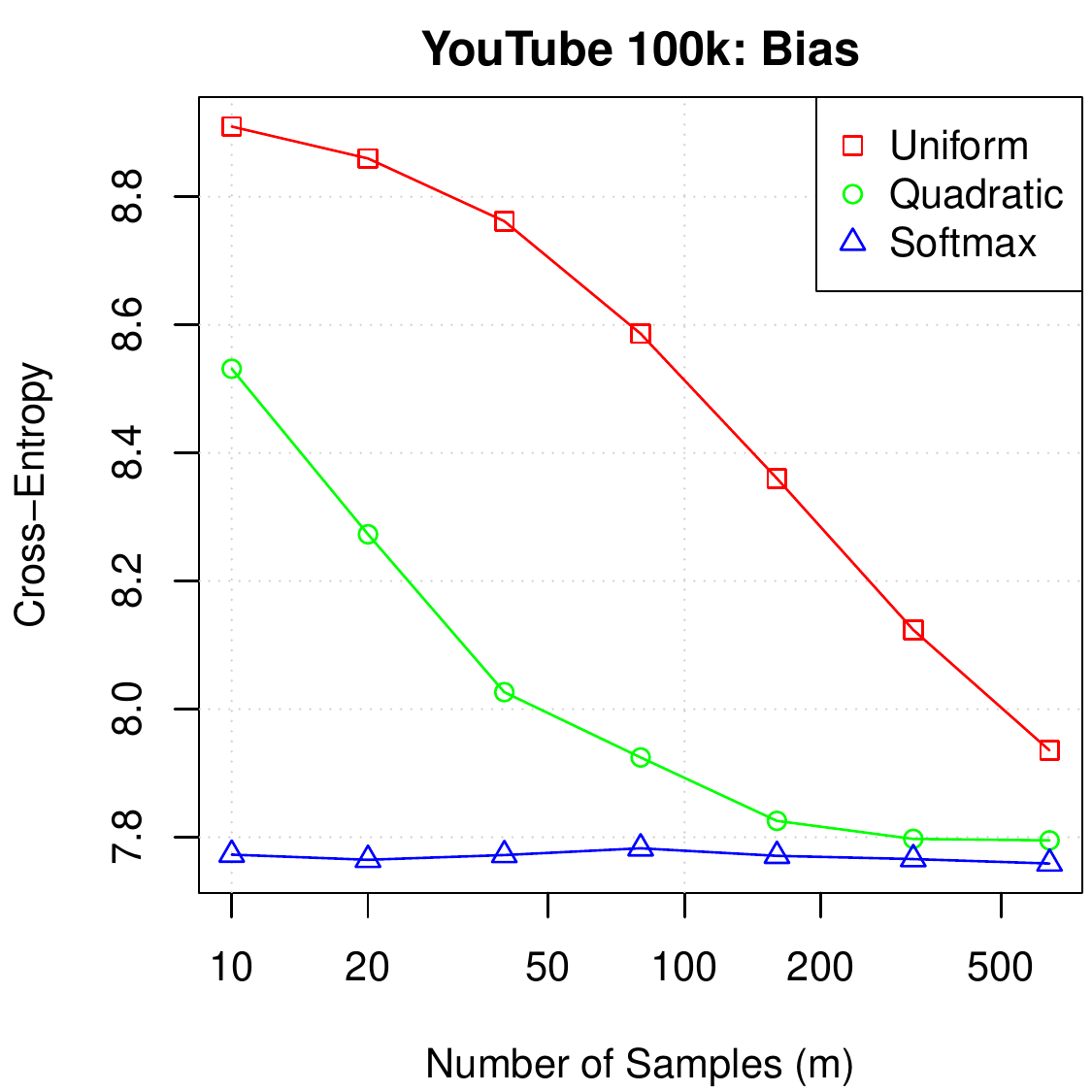}
	\caption[bias]{Final model quality when training a sampled softmax with different sampling distributions (\emph{uniform}, \emph{quadratic}, \emph{softmax}) and number of samples, $m$.
    The quadratic distribution needs one to two orders of magnitude less samples than uniform sampling to learn a low bias model.
	Penn Tree Bank includes additional results for a \emph{unigram} and a \emph{bigram} sampler which are common sampling distributions in NLP sequence tasks.
    The results for Penn Tree Bank also include a \emph{quartic} sampler which is a 4-th degree polynomial kernel with $q_i \propto o_i^4 + 1$.}
    \label{Bias Plots}
\end{figure*}

\subsubsection{Datasets and Models}
\label{sec:datasets}
We study sampled softmax on a natural language processing (NLP) problem and a recommender system dataset.

\paragraph{Penn Tree Bank}

For the NLP problem, we learn a language model on the Penn Tree Bank dataset~\cite{ptb}, a dataset with approximately 1 million training words and a vocabulary of size 10,000.
We use the well-studied "medium regularized LSTM" implementation\footnote{\url{https://www.tensorflow.org/tutorials/recurrent}} of \citet{zaremba:14}.
We made one minor modification, and changed the units per layer from 650 to 200.
Doing so ensures that the expressiveness of the model is small enough that we do not need to worry about early-stopping, and dropout on its own is a sufficient regularizer.
We report the perplexity loss as in \cite{zaremba:14}.

\paragraph{YouTube}
In this recommendation dataset, we predict which video a user will watch next based upon various user features and the three previously watched videos.
We train a deep neural network where the user features and previous videos are the input and the output is the watch probability over all videos.
To study the effect on sampling, we created two versions of the dataset: YouTube10k, and YouTube100k with 10,000, and 100,000 videos (=classes) respectively.
The 10k dataset has about 113 million training examples, and the 100k dataset about 187 million examples. 
For recommender systems, a common evaluation protocol is to rank videos by their scores and then use some ranking metric (e.g. mean average precision) to measure the quality of a model.
Here, we only wish to measure how well sampled softmax approximates a full softmax.
Thus, we measure the cross-entropy loss of full softmax.
In our YouTube experiments, the cross-entropy loss was also highly correlated with ranking metrics such as mean average precision.

\subsubsection{Sampling Distributions}

We test the performance of three sampling distributions:
\begin{enumerate}
    \item Uniform distribution, $q_i \propto 1$, where every class is sampled with the same probability. This provides a convenient baseline.
    \item Softmax distribution, $q_i \propto \exp(o_i)$, which is the ideal sampling distribution as shown in Theorem~\ref{th:unbiased}, but is very expensive to sample from.
    \item Quadratic distribution, $q_i \propto 100(o_i)^2 + 1$, as proposed in Section~\ref{sec:quadratic_kernel}
\end{enumerate}

\begin{figure*}[ht]
    \includegraphics[width=0.66 \columnwidth]{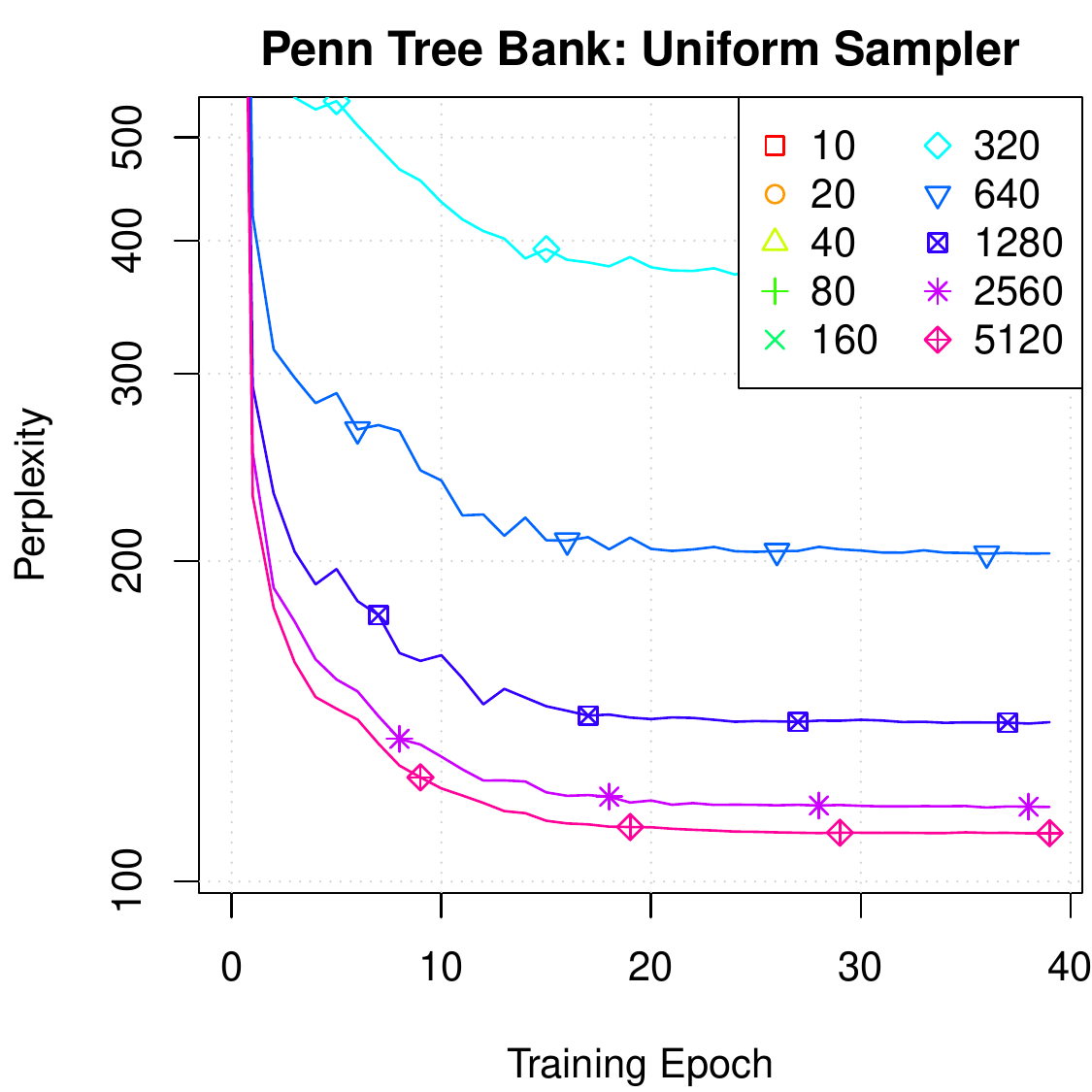}
    \includegraphics[width=0.66 \columnwidth]{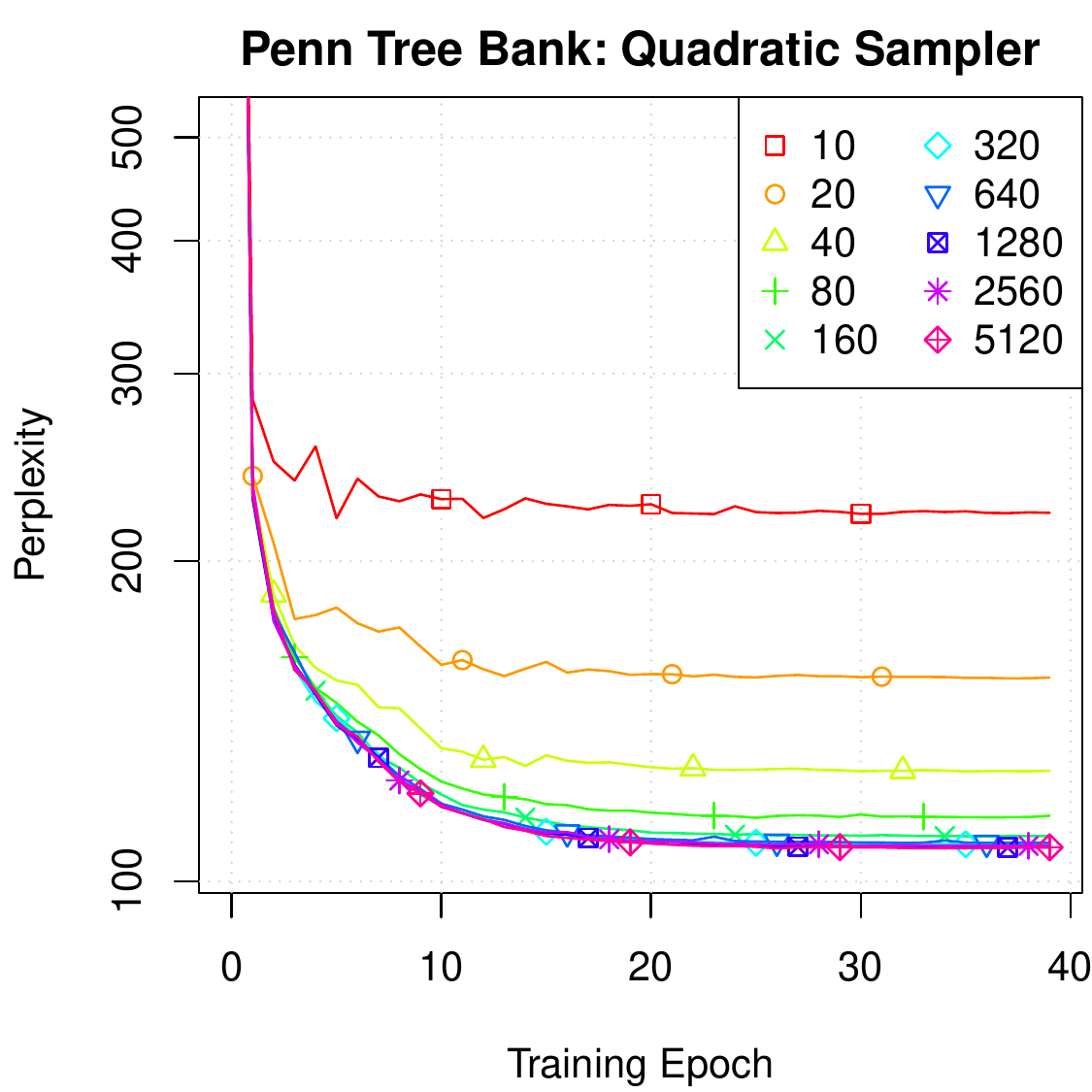}
    \includegraphics[width=0.66 \columnwidth]{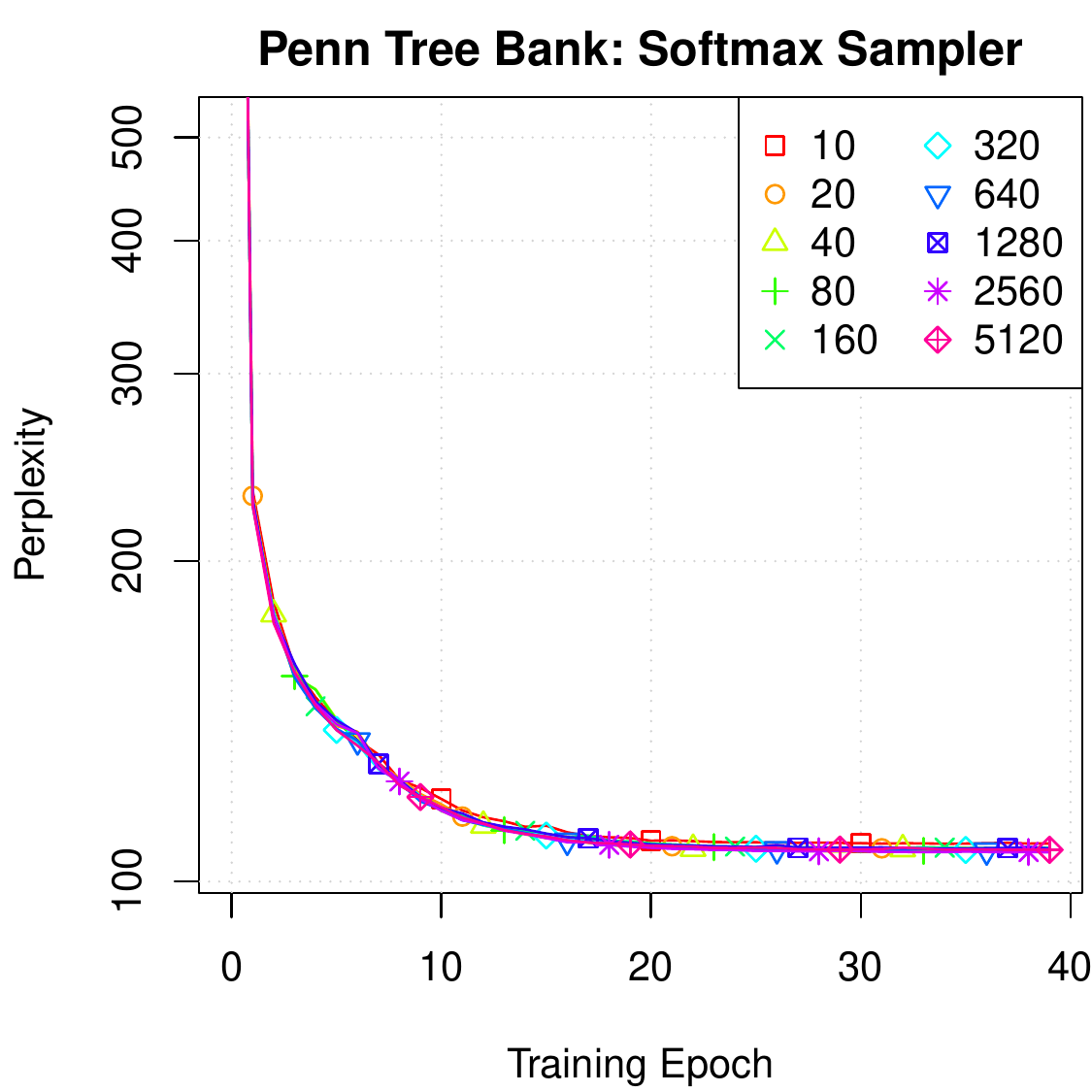}
    \caption{
    Convergence speed for a varying sample size $m \in \{10,20,40,\ldots\}$.
    Once enough samples are taken to remove the bias, adding more samples does not increase convergence speed considerably.
	Additional results for YouTube10k and YouTube100k as well as other samplers for Penn Tree Bank show a similar behavior an can be found in Figures~\ref{fig:exp_num_samples_ptb_appendix}, \ref{fig:exp_num_samples_appendix}. 
    }
    \label{fig:exp_num_samples}
\end{figure*}

\subsection{Results and Analysis}

\subsubsection{Bias of Sampling}

First, we study the bias of sampled softmax empirically.
According to Section~\ref{sec:unbiased}, any sampled softmax is biased unless softmax is chosen as the sampling distribution, and this bias decreases as the sample size, $m$, increases.
We visualize the bias by learning models with different sampling strategies until convergence and reporting the final accuracy.
Very biased models perform poorly even when they are run until convergence. 

The results are shown in Figure~\ref{Bias Plots}.
As expected from theorem~\ref{th:unbiased}, the quality of softmax sampling, i.e., $\bq \propto \exp(\bo)$, is independent of the number of samples $m$.
This verifies that a "good" sampling distribution does not need many samples.
On the other hand, uniform and quadratic sampling are both biased and their final quality improves with increasing sample size, $m$.
Again, it is important to note that training for more epochs does not solve this issue because the loss that sampled softmax optimized is biased when sampling uniformly or according to a quadratic kernel for any fixed size $m$.
On all datasets, quadratic has a much lower bias than uniform sampling and approaches the loss of softmax with 10s to 100s of samples.

\begin{figure*}[ht]
    \includegraphics[width=0.66\columnwidth]{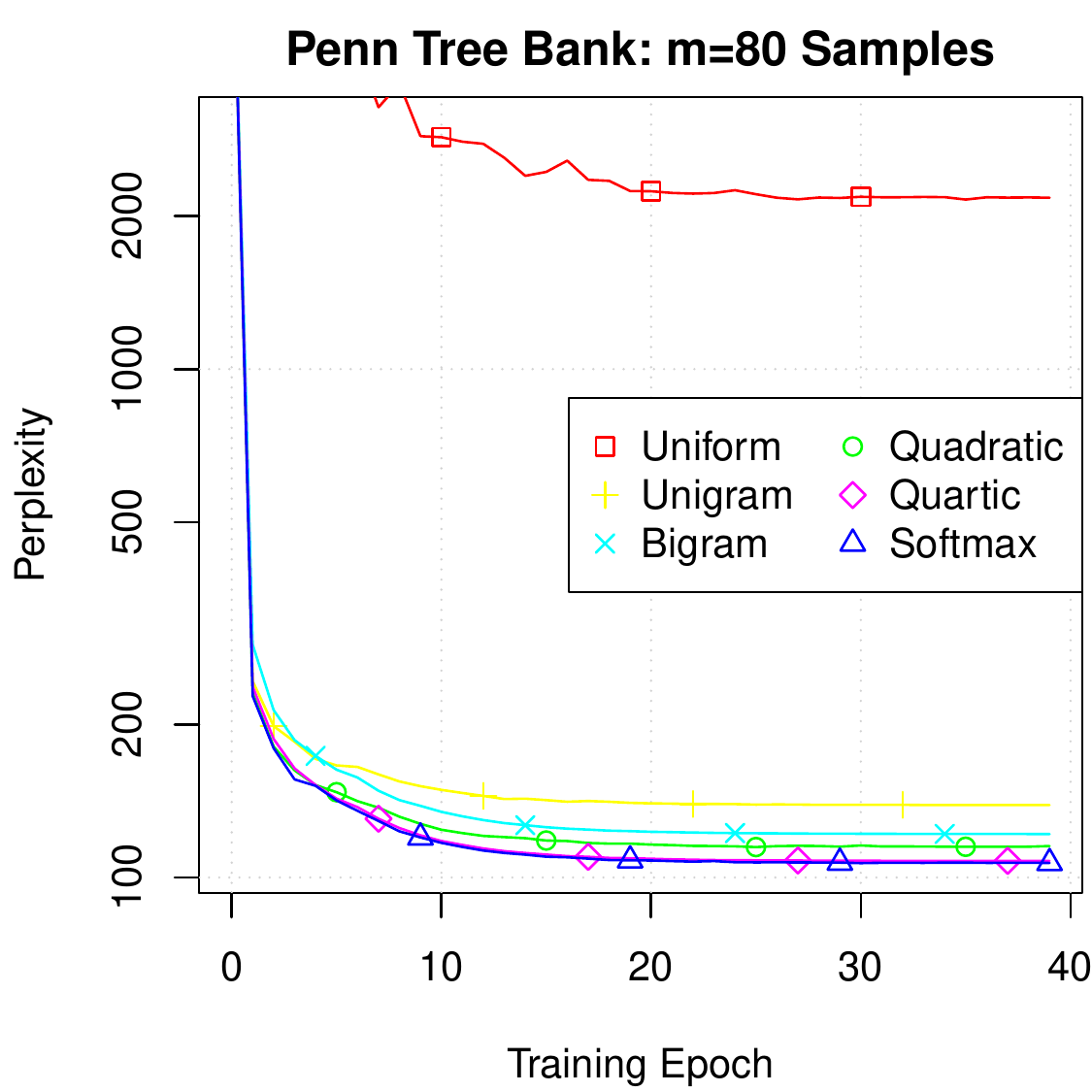}
    \includegraphics[width=0.66\columnwidth]{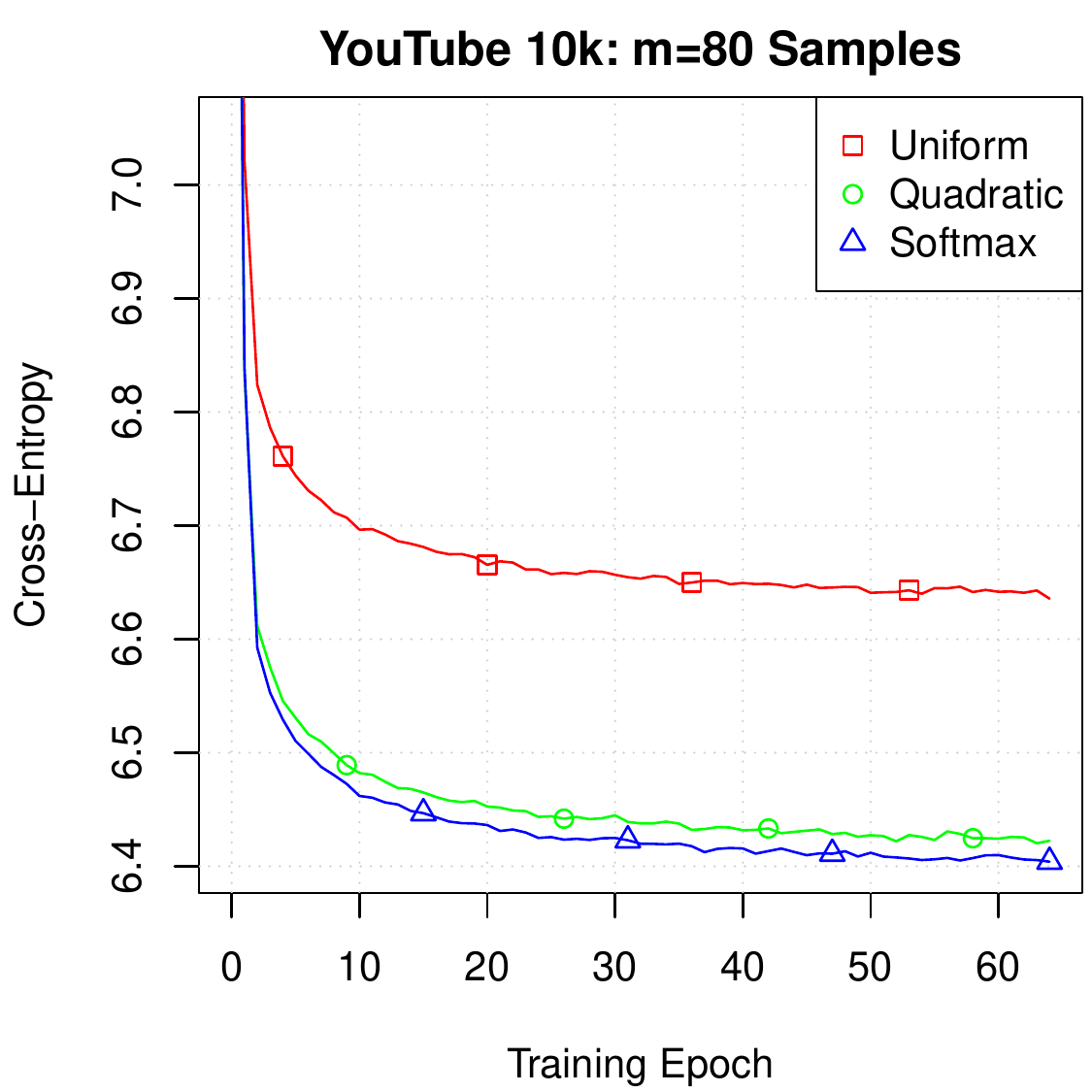}
    \includegraphics[width=0.66\columnwidth]{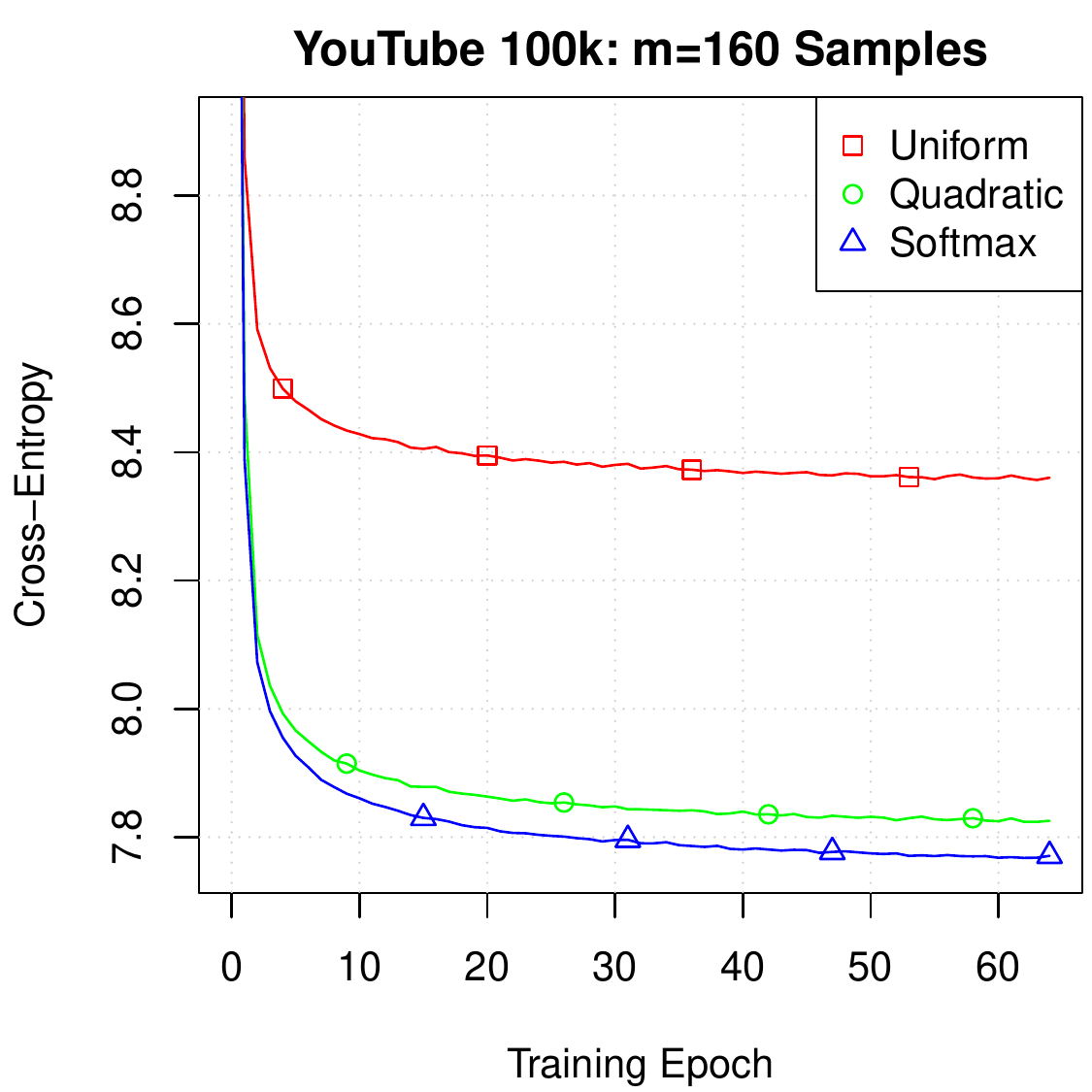}
    \caption{
    Convergence speed of different sampling distributions for a fixed sampling size.
    The convergence speed of all distributions is similar only the bias is different.
    Figure~\ref{fig:exp_distributions_appendix} shows more comparisons.
    }
    \label{sampling plot}
\end{figure*}

\subsubsection{Convergence Speed}

Second, we study the speed of convergence by measuring the progress of the loss against the number of training epochs.
Every update step consists of reading a batch of training examples, sampling $m$ negative classes per example and performing the update with sampled softmax.
We plot loss against epochs instead of wall runtime to eliminate any implementation specific artifacts.
Please note that the larger the sample size $m$, the more computationally expensive an epoch.

\paragraph{Sample Size}

First, we study how the sample size, $m$, influences convergence speed.
Figure \ref{fig:exp_num_samples} shows the convergence for the three sampling strategies.
As already discussed, we see that the number of samples has a large effect on the accuracy of the model for the uniform and quadratic sampler.
Interestingly, once enough samples are taken to remove the bias, adding more samples appears to have a small and mostly unobservable effect on convergence speed.
We previously discussed how bias of the sampled softmax estimator affects the final optimum achieved.
The variance of this estimator affects how many steps we need to converge.
The variance has two sources:
(i) The gradient computed on a batch is a noisy (but unbiased) estimator of the gradient on the entire training set and
(ii) the gradient given a set of sampled classes is an estimator of the gradient on that batch.
While taking a larger sample size can reduce the variance from source (ii), if the variance from source (i) is the dominate source, doing so will not appreciably increase convergence speed.
For our data sets, we found that once we take a reasonable number of samples (only 10s), adding more does not noticeably increase convergence speed.
This is likely because the variance from source (i) dominates that from source (ii).
For instance on Penn Tree Bank, quadratic sampling with $m \in \{160, 320, \ldots\}$ samples does not show any difference in convergence speed.

To summarize, the sample size $m$ influences the bias but the influence on the convergence speed is small and often not noticeable.

\paragraph{Sampling Distribution}

Finally, we fix the number of samples $m$ and vary the sampling distribution.
Figure~\ref{sampling plot} shows that all three sampling distributions have a comparable speed of convergence, however, uniform converges to a much worse loss due to its high bias.
Quadratic and softmax converge similarly although quadratic has a slightly worse loss throughout the whole training process due to its bias.

\section{Related Work}

In this section, we summarize the main approaches for training classification models over many classes.
All of them make some approximation of the full softmax to lower the computational complexity.

\subsection{Sampled Softmax}

Other works on sampled softmax have noted that a good sampling distribution can boost performance and attempted to come up with such distributions.
\citet{bengio:08} propose an adaptive sampler for language models.
They argue that the sampling distribution should track the model distribution as closely as possible.
They propose to learn a mixture of unigrams, bigrams, trigrams, etc. that is adapted while training.
While the work of \citet{bengio:08} needs a second model to track the trained model, our work uses the trained model directly for sampling.
This makes our approach much easier to apply.
Secondly, kernel based sampling is more appealing for sophisticated model structures where it is hard to come up with a simple model that can track the trained model well.
\citet{labeau:17} study sampling distributions for noise contrastive estimation (NCE)~\cite{gutman:10}.
Their experiments highlight the issues of simple sampling distributions such as uniform, or unigram.
Another idea to improve the sampling distribution is the Two-Pass Approximate Adaptive Sampling for Softmax (TAPAS).
In that work, \citet{bai:17} propose taking one large sample of classes, which might be in the order of 100,000 (20\% of all classes in their case) and computing the logits from that sample.
Then, a smaller number of classes, e.g., 1,000, is chosen from those 100,000 classes based on the computed logits.
This second sample of 1,000 classes is used for the sampled softmax.
By using a distributed implementation and GPUs, it is possible to compute the logits of the larger sample quickly.
While the TAPAS sampler is adaptive and depends on the current model's output as in our work, it is computationally much more expensive.
\citet{bakhtiary:15} also explore selectively computing logits using hashing to obtain faster training steps for large batch sizes.

\subsection{Hierarchical Softmax and Its Variations}

Hierachical Softmax (HSM) is an approximation of a full softmax introduced in \cite{googman:01} that is quickly computable.
It involves grouping the classes into clusters, where each cluster is a latent variable.\footnote{Some of the literature refers to what we call classes as simply words and uses classes to refer to what we call clusters. We chose the term \emph{classes} instead of \emph{words} to stress that hierarchical softmax can apply to contexts outside of NLP.}
If $c_j$ is the $j^{\text{th}}$ cluster and class $i$ is in $c_j$, then we factor $p_i$ as $p(y_i|x) = p(c_j | x)\, p(y_i | c_j)$. If we set the number of classes in each cluster to be  $O(\sqrt{n})$ and the cluster probabilities can be computed in time $O(d)$, then this version of hierarchical softmax can be done in $O(d\sqrt{n})$.

\citet{morin:05} extend this structure to a tree.
Instead of having one layer of clusters they use a binary tree where each internal node is a cluster and the leaf nodes are the classes.
The probability of a class is then the product of the conditional probability of each node along the path from the root to that class. Such a structure allows for $O(d \log n)$ training time.

While hierarchical softmax can be much faster than a full softmax, it often performs worse at convergence.
For instance, \citet{chen:15} found full softmax to achieve a perplexity more than 10\% better than hierarchical softmax.
They also note that while hierarchical softmax can speed up training, it slows down inference if the goal is to compute the class or classes with the highest logits.
In particular, both a full softmax and sampled softmax can treat inference as a maximum inner product search, which can be done in sublinear time with methods such as locally sensitive hashing \cite{shrivastava:14} or clustering \cite{auvolat:15}.
The tree structure has a large effect both on the final performance and the efficiency of each training step, so there is much work on modifying that structure.
Various approaches have been used to build this tree, such as by class similarity \cite{le:11}, by frequency binning \cite{mikolov:11}, or to optimize the speed of the model \cite{grave:17}.
See \citet{zweig:13} for experimental results showing the effects of some common tree structures.

\subsection{Spherical Softmax and Kernels}

\citet{vincent:nips15} propose to optimize a variation of softmax referred to as spherical softmax.
In spherical softmax, the prediction distribution is changed by replacing the $\exp$ in eq.~(\ref{softmax probability}) with a quadratic function.
This alternative formulation allows exact gradient computations without computing all the logits, needing only $\O(d^2)$ time.
This time is thus independent of the number of classes, resulting in a significant speedup.
\citet{spherical:15} note that on some problems, spherical softmax produces models with comparable quality as full exponential softmax (eq.~\ref{softmax probability}), but on other problems the quality is considerably worse.
Especially for problems with many classes, optimizing spherical softmax seems to produce low quality results.
We also found the spherical formulation not to be as effective as an exponential softmax on our datasets.
Finally, our approach of kernel based sampled softmax with a quadratic kernel can be viewed as using the spherical softmax for sampling, and then the normal exponential softmax formulation for computation of the loss.
As in our approach the quadratic kernel is only used for sampling, the high quality of exponential softmax is preserved.
\citet{rastogi:15} propose kernel feature maps to approximate the softmax partition function during inference after a model has been learned.
In contrast to this, our work focuses on using kernels during training which requires dealing with parameter updates.

\section{Conclusion}

This work shows the importance of the sampling distribution when learning a sampled softmax model.
In particular, any sampling distribution besides softmax is biased and converges to a worse quality than full softmax -- no matter how many learning steps are taken.
The only way to mitigate the bias is to increase the sample size or to use a better sampling distribution.
A good sampling distribution should depend on the model output, which requires the distribution to be example dependent, model structure dependent and model parameter dependent.
We have introduced the new class of kernel based sampling methods that sample based on the model output.
Kernels allow efficient sampling even for a large number of classes as they depend only on the dimension of the kernel space and with a divide and conquer algorithm on the logarithm of the number of classes.
On several experiments, a quadratic kernel showed a one to two order better sample efficiency than uniform sampling.

Both the sampling algorithm as well as the kernel function offer several directions for future work.
Besides other analytical kernels such as polynomial kernels, random feature maps~\cite{rahimi:08} could provide another rich class for constructing kernels.
For particular classes of kernels, more efficient sampling algorithms might exist, e.g., for non-negative maps, $\bphi$, a clever use of Alias sampling~\cite{walker:77} could provide an $\O(D)$ time sampling algorithm.

\bibliography{paper}
\bibliographystyle{icml2018}

\section*{Appendix}

\subsection*{Sampled Softmax is biased}

\begin{theorem}
\label{th:unbiased}
The gradient of sample softmax is an unbiased estimator of the full softmax gradient iff $q_i = p_i \propto \exp(o_i)$.
\end{theorem}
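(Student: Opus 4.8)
The plan is to reduce the claim to the single scalar identity of eq.~(\ref{eq:exgradient}): the sampled-softmax gradient is unbiased iff, for every class $i$,
\begin{align*}
E\!\left[\sum_{j=1}^{m+1} I(s_j = i)\, p'_j\right] = p_i .
\end{align*}
First I would fix notation. Write $t$ for the (unique) positive class and $Z := \sum_{k=1}^n \exp(o_k)$ for the true partition function, and draw the $m$ negatives i.i.d.\ from $\bq$ over the $n-1$ non-positive classes. In the resulting sample, slot $1$ is the positive class and contributes $\exp(o'_1) = \exp(o_t)$, while each negative slot $j$ contributes $\exp(o'_j) = w_{s_j}$ with $w_k := \exp(o_k)/(m\,q_k)$; put $W := \sum_{l=2}^{m+1} w_{s_l}$ for the total negative weight and $S := \exp(o_t) + W$ for the (random) sampled partition function, so $p'_1 = \exp(o_t)/S$ and $p'_j = w_{s_j}/S$ for $j \ge 2$. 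The obstacle shared by both directions is that $S$ is random and sits in a denominator, so $E[\,\cdot/S\,]$ does not factor.

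\emph{Sufficiency.} If $q_k \propto \exp(o_k)$, then every $w_k$ equals one and the same constant, so $W$ — and hence $S = \exp(o_t)+W$ — is \emph{deterministic}, in fact $S = Z$. With $S$ constant the expectation factors: letting $N_i$ be the number of negative slots drawing class $i$ (so $E[N_i] = m\,q_i$), the left side of the identity is $(w_i/Z)\,E[N_i] = \exp(o_i)/Z = p_i$ for a negative class $i$, while for $i = t$ only slot $1$ contributes and it is $\exp(o_t)/Z = p_t$. This holds for every $m \ge 1$, so I would present it first as the easy half.

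\emph{Necessity.} Now suppose the identity holds for all $i$. First, if $q_k = 0$ for some $k \neq t$ the identity fails at $i = k$ (its left side is $0$, its right side $p_k > 0$), so $\bq$ is supported on every negative class. The main move is to sum the $n-1$ negative-class identities: the numerators add up, since $\sum_{i \neq t} w_i\, N_i = W$ (each negative slot contributes its own weight exactly once), so — after rewriting $W/(\exp(o_t)+W) = 1 - \exp(o_t)/(\exp(o_t)+W)$ — the whole family collapses to
\begin{align*}
E\!\left[\frac{1}{\exp(o_t) + W}\right] = \frac{1}{Z} .
\end{align*}
Since $W$ is a sum of $m$ i.i.d.\ copies of $w_s$ with $s \sim \bq$, a one-line computation gives $E[W] = Z - \exp(o_t)$, so the right-hand side is exactly $1/(\exp(o_t) + E[W])$. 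But $x \mapsto 1/(\exp(o_t) + x)$ is strictly convex for $x \ge 0$, so the equality case of Jensen's inequality forces $W$ to be almost surely constant; as $\mathrm{Var}(W) = m\,\mathrm{Var}(w_s)$, this makes $w_s$ almost surely constant, i.e.\ $w_k = \exp(o_k)/(m q_k)$ is the same for every negative class $k$. That is precisely $q_i \propto \exp(o_i)$, and $\sum_i q_i = 1$ then pins it to $q_i = p_i$.

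I expect the necessity direction to be the crux — specifically, taming the random denominator $S$. Without first collapsing the $n-1$ equations into the single identity $E[1/(\exp(o_t)+W)] = 1/Z$, one is left with an intractable ratio expectation, and it is the strict-convexity / Jensen step that then does the real work. A minor, purely bookkeeping, point is the treatment of a sampled negative that coincides with the positive class, which I would dispose of by drawing negatives from the non-positive classes — the same convention under which $S = Z$ exactly in the sufficiency argument.
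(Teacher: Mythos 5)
Your proof is correct and rests on the same essential mechanism as the paper's: Jensen's inequality applied to the reciprocal of the random sampled partition function $\exp(o_t)+W$, whose equality case forces the corrected weights $\exp(o_k)/(m\,q_k)$ to be constant, hence $q_k \propto \exp(o_k)$; and for sufficiency, the observation that under softmax sampling the sampled partition function is deterministic and equals the true one, so the expectation factors. The one organizational difference is where you extract the key identity $E\bigl[1/(\exp(o_t)+W)\bigr] = 1/Z$ for necessity: the paper reads it off directly from the positive-class coordinate of the gradient (its Case 1), whereas you obtain it by summing the $n-1$ negative-class identities and rewriting $W/S = 1 - \exp(o_t)/S$. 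Both routes are valid and land on the identical Jensen step; the paper's is marginally shorter since no aggregation is needed, while yours has the small side benefit of making the full-support requirement on $\bq$ explicit before dividing by $q_k$. Your handling of a sampled negative coinciding with the positive class matches the paper's implicit convention (its sums in eq.~(\ref{eq:helper}) run over $j \ge 2$), so there is no gap there either.
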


\begin{proof}
Note that the random variable in eq.~(\ref{eq:exgradient}) is the sample $\bs$.
Remember that the sample consists of one positive label that is chosen with probability $1$ and $m$ negatives that are sampled from $\bq$.
Our proof analyzes both parts separately.
For notational convenience and without loss of generality, we assume that the positive class has index 1, $y_1 = 1$, and that the positive is at the first position in the sample, i.e., $s_1 = 1$.
 
Case 1: First we analyze the output $o_i$ of the positive label, i.e., $y_i=1$.
Under our notational assumptions $i=1$.
\begin{multline*}
       E\left[\sum_{j=1}^{m+1} I(s_j = i) p'_j\right] = E[p'_1] \\
       = E\left[\frac{\exp(o_1)}{\exp(o_1) + \sum_{k=2}^{m+1} \exp(o'_k)}\right]\\
     \stackrel{(A)}{\geq}  \frac{\exp(o_1)}{\exp(o_1) + E\left[\sum_{k=2}^{m+1} \exp(o'_k)\right]} \\
     \stackrel{eq. (\ref{eq:helper})}{=}  \frac{\exp(o_1)}{\exp(o_1) + \sum_{k=2}^{n} \exp(o_k)} = p_i
\end{multline*}
The last step is based on:
\begin{multline}
     E\left[\sum_{k=2}^{m+1} \exp(o'_k)\right] = \sum_{k=2}^{m+1} E\left[\frac{\exp(o_{s_k})}{m q_{s_k}}\right] \\
   =  \sum_{k=2}^{m+1} \sum_{j=2}^{n} q_j \frac{\exp(o_j)}{m q_j} = \sum_{j=2}^{n} \exp(o_j) \label{eq:helper}
\end{multline}
Step $(A)$ is Jensen's inequality.
This turns into an equality if and only if $\sum_{k=2}^{m+1} \exp(o'_{s_k})$ is constant which means $\exp(o'_l)=\exp(o_{s_l} -\ln m q_{s_l})=\frac{\exp(o_{s_l})}{m q_{s_l}}$ has to be constant.
This is true iff $q_{s_l} \propto \exp(o_{s_l})$.
In other words, for a positive label, the softmax distribution is the only choice to create an unbiased sampled softmax.

Case 2: Let $i$ be a negative class such that $y_i=0$.
Under our notational assumptions, $i>1$.
For the second case, we only show that softmax is unbiased, i.e., the sufficient condition of the theorem.
We don't show that softmax is the only distribution that is unbiased.
This necessary condition is already covered by case 1.
Let the negative sampling distribution be the softmax $q_i := \frac{\exp(o_i)}{\sum_{l=2}^n \exp(o_j)}$.
\begin{multline*}
       E\left[\sum_{j=1}^{m+1} I(s_j = i) p'_j\right]\\
     = E\left[\sum_{j=2}^{m+1} I(s_j = i)\frac{\exp(o'_j)}{\exp(o_1) + \sum_{k=2}^{m+1}\exp(o'_k)}\right]\\
    \stackrel{eq.~(\ref{eq:helper2})}{=} \frac{1}{m} E\left[\sum_{j=2}^{m+1} I(s_j = i)\frac{1}{q_i}\frac{\exp(o_i)}{\exp(o_1) + \sum_{k=2}^{n}\exp(o_k)}\right] \\
    = \frac{1}{m} E\left[\sum_{j=2}^{m+1} I(s_j = i)\frac{p_i}{q_i}\right]= p_i
\end{multline*}
This proof uses that for the softmax sampling distribution, the sum of corrected sampled outputs is equal to the sum of all outputs.
\begin{align}
      &\sum_{k=2}^{m+1}\exp(o'_k) = \sum_{k=2}^{m+1}\frac{\exp(o_{s_k})}{m\,q_{s_k}} \label{eq:helper2}\\
    = &\sum_{k=2}^{m+1}\frac{\exp(o_{s_k})}{m}\frac{\sum_{l=2}^{n}\exp(o_l)}{\exp(o_{s_k})} = \sum_{l=2}^{n}\exp(o_l) \notag
\end{align}
This equality is related to eq.~(\ref{eq:helper}).
While eq.~(\ref{eq:helper}) holds for any sampling distribution but only in expectation, eq.~(\ref{eq:helper2}) holds only for softmax but for any sample.
\end{proof}

\begin{figure*}[ht]
    \includegraphics[width=0.66 \columnwidth]{fig/ptb_uniform.pdf}
    \includegraphics[width=0.66 \columnwidth]{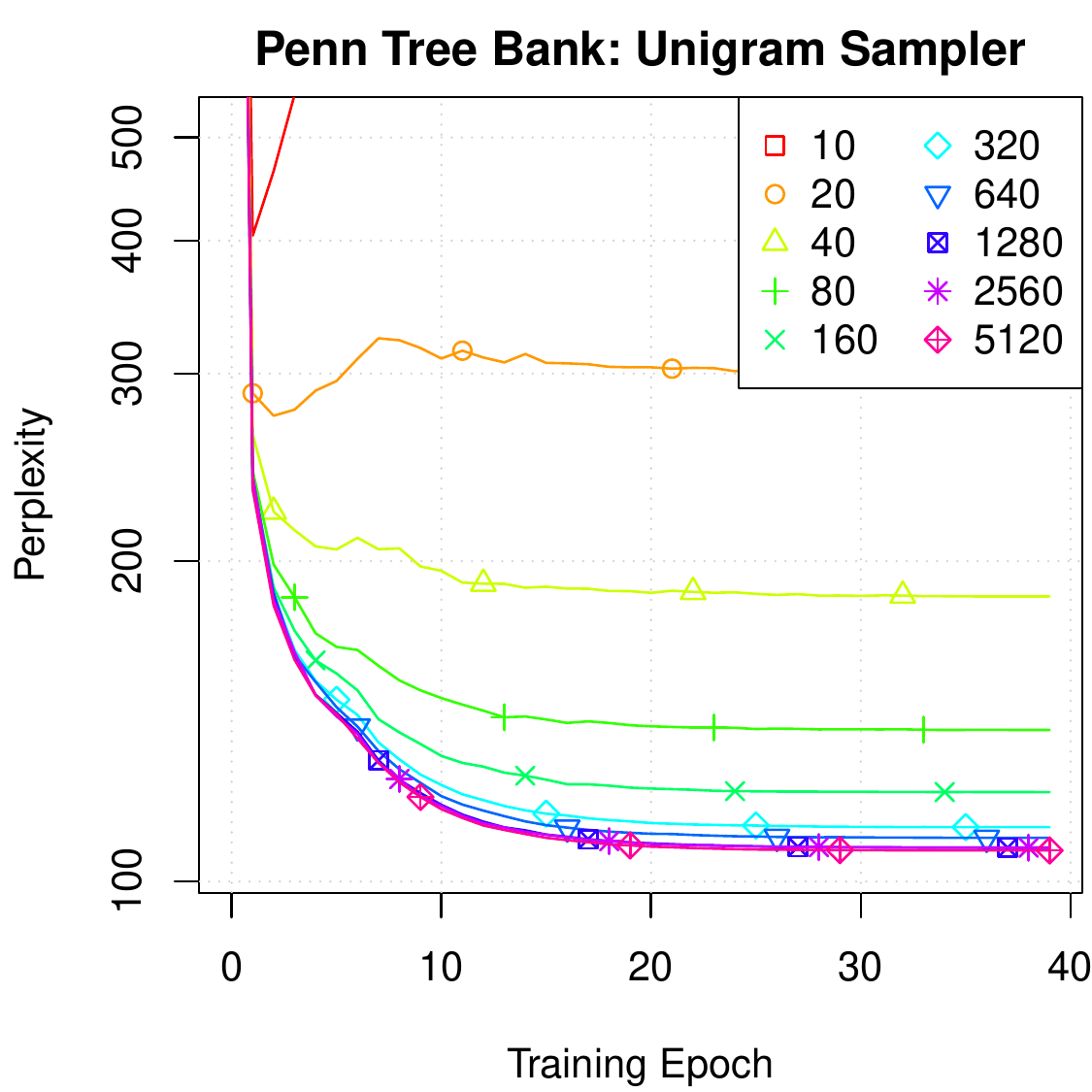}
    \includegraphics[width=0.66 \columnwidth]{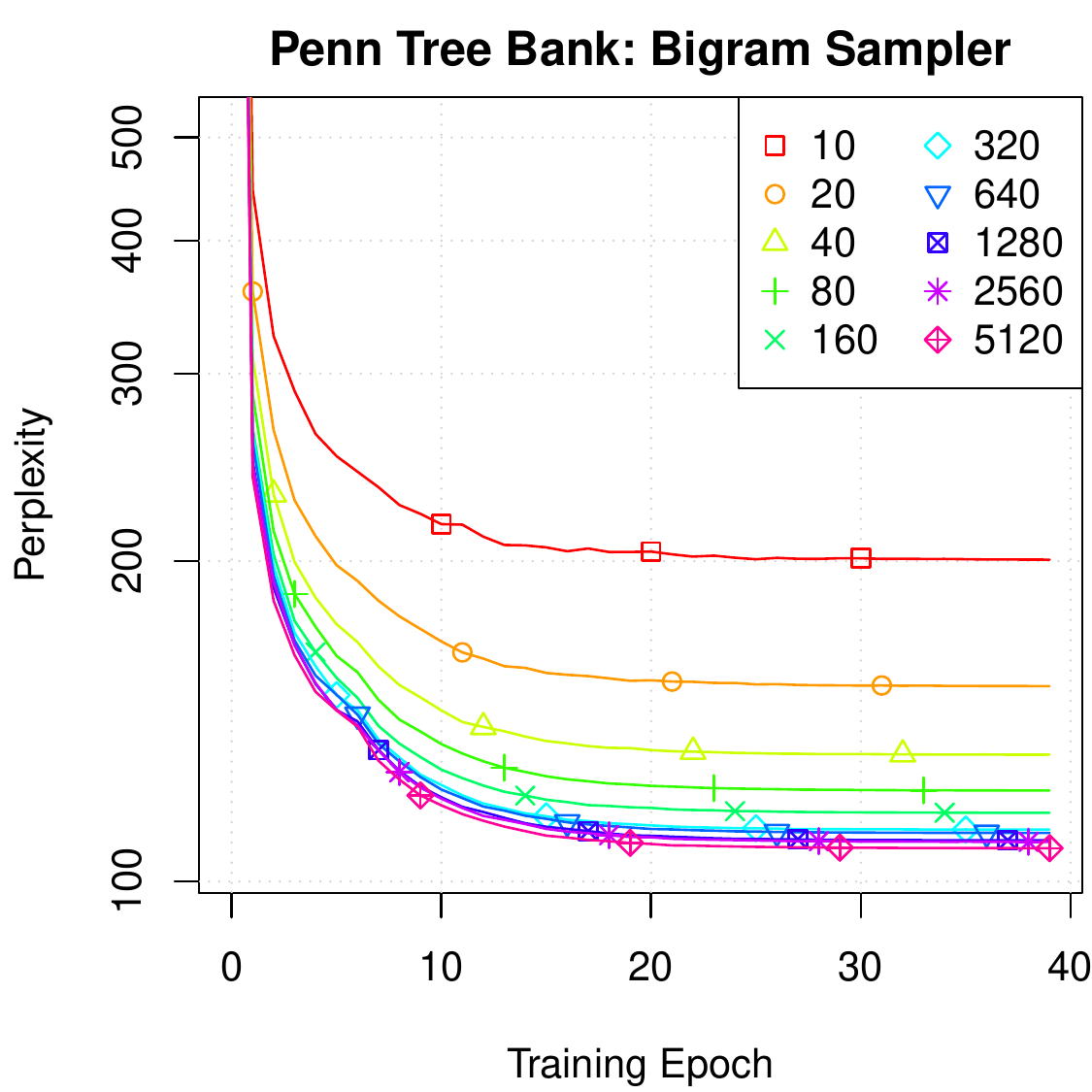}\\
    \includegraphics[width=0.66 \columnwidth]{fig/ptb_quadratic.pdf}
    \includegraphics[width=0.66 \columnwidth]{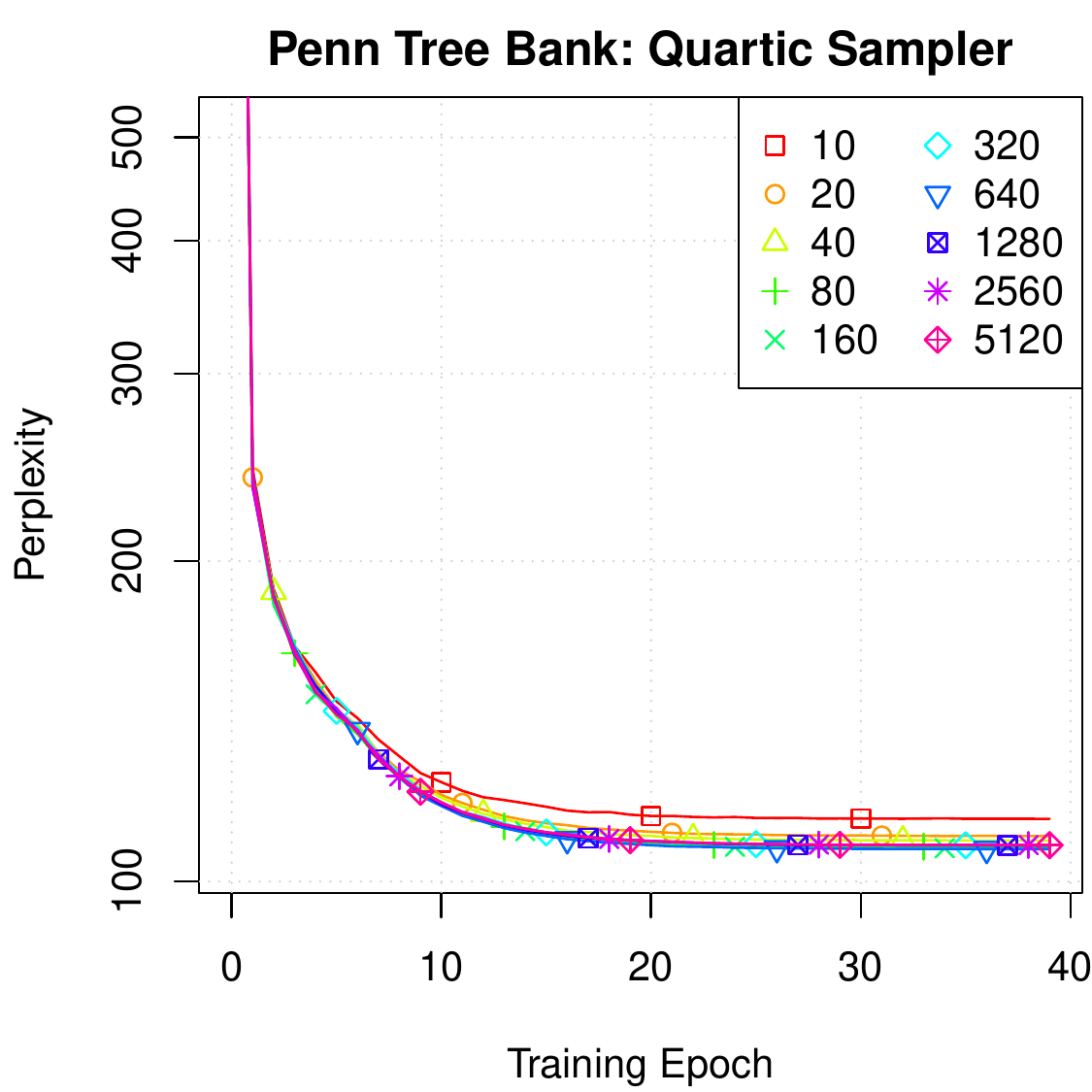}
    \includegraphics[width=0.66 \columnwidth]{fig/ptb_softmax.pdf}
    \caption{
	    Convergence speed of different sampling distributions (\emph{uniform}, \emph{unigram}, \emph{bigram}, \emph{quadratic}, \emph{quartic}, \emph{softmax}) for a varying sample size $m \in \{10,20,40,\ldots\}$ on the \emph{Penn Tree Bank} dataset.
	Once enough samples are taken to remove the bias, adding more samples does not increase convergence speed considerably.
    }
    \label{fig:exp_num_samples_ptb_appendix}
\end{figure*}

\begin{figure*}[ht]
    \includegraphics[width=0.66 \columnwidth]{fig/ptb_uniform.pdf}
    \includegraphics[width=0.66 \columnwidth]{fig/ptb_quadratic.pdf}
    \includegraphics[width=0.66 \columnwidth]{fig/ptb_softmax.pdf}\\
    \includegraphics[width=0.66 \columnwidth]{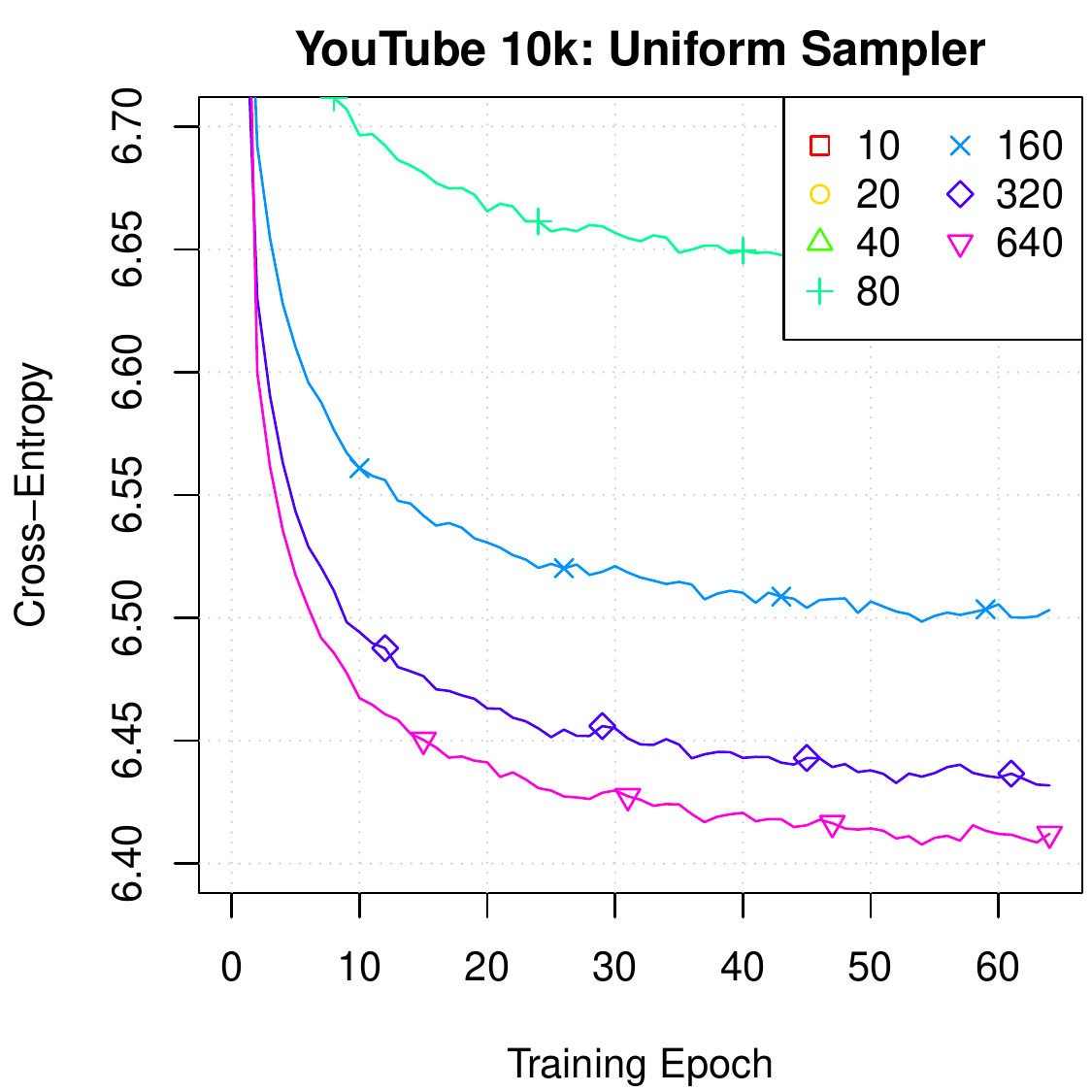}
    \includegraphics[width=0.66 \columnwidth]{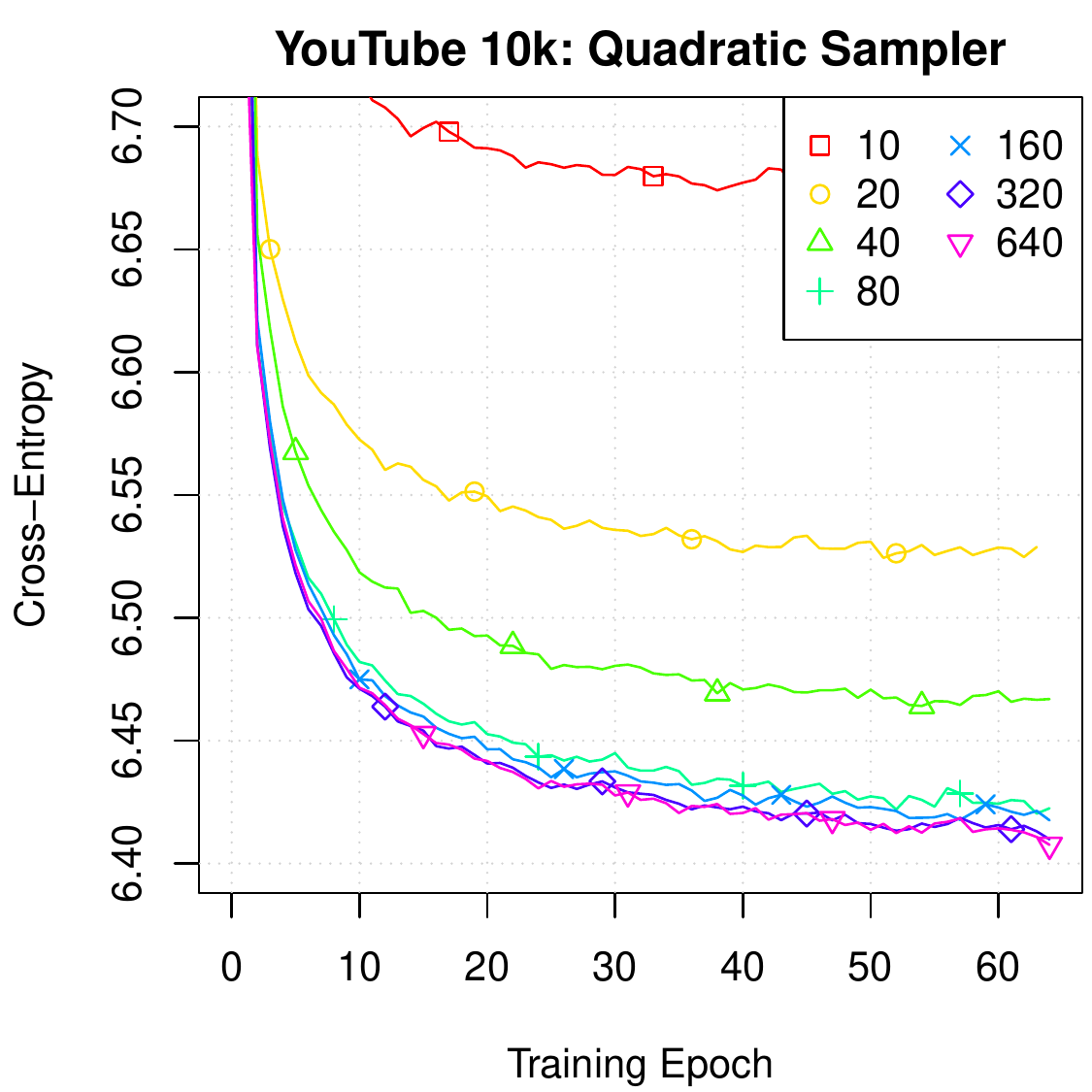}
    \includegraphics[width=0.66 \columnwidth]{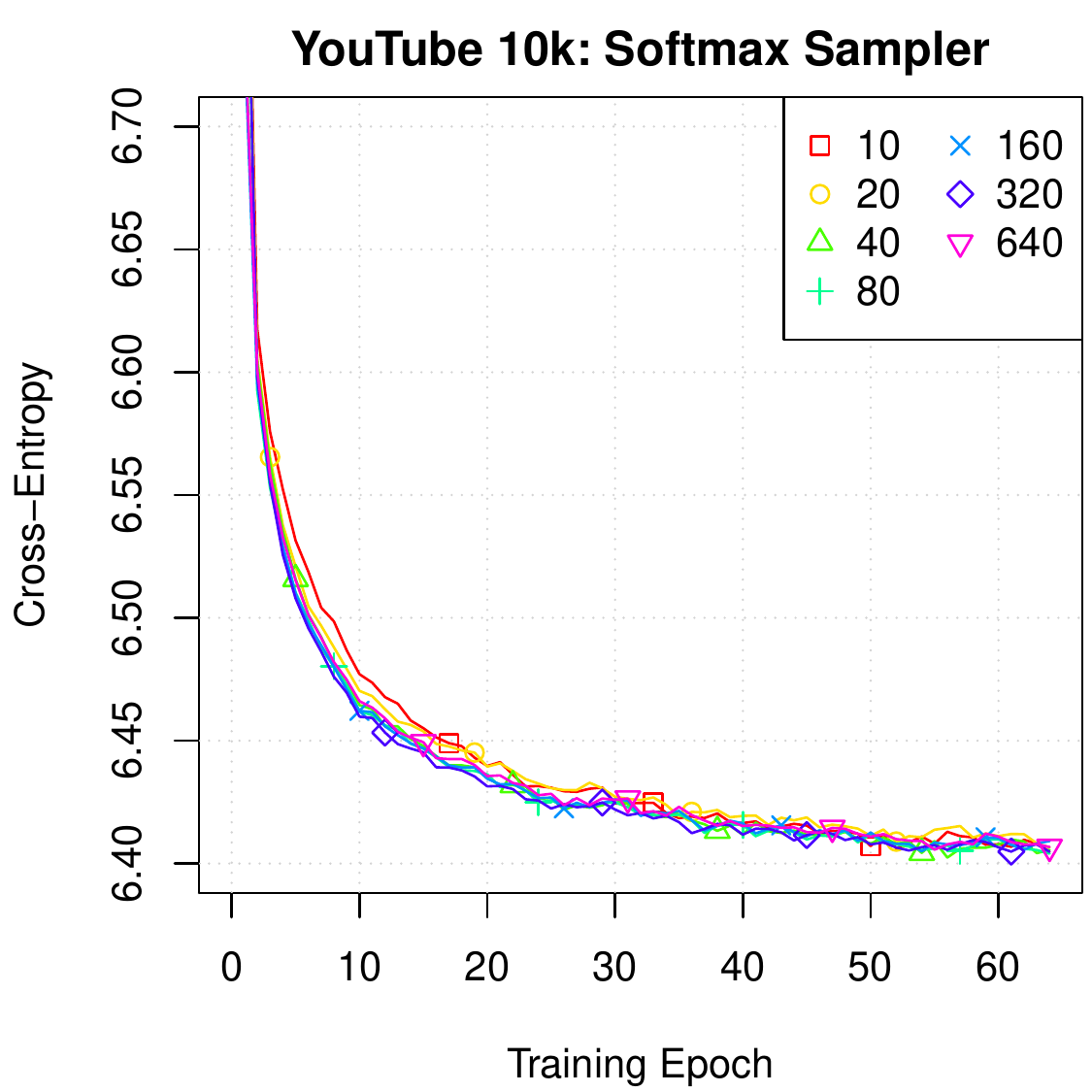}\\
    \includegraphics[width=0.66 \columnwidth]{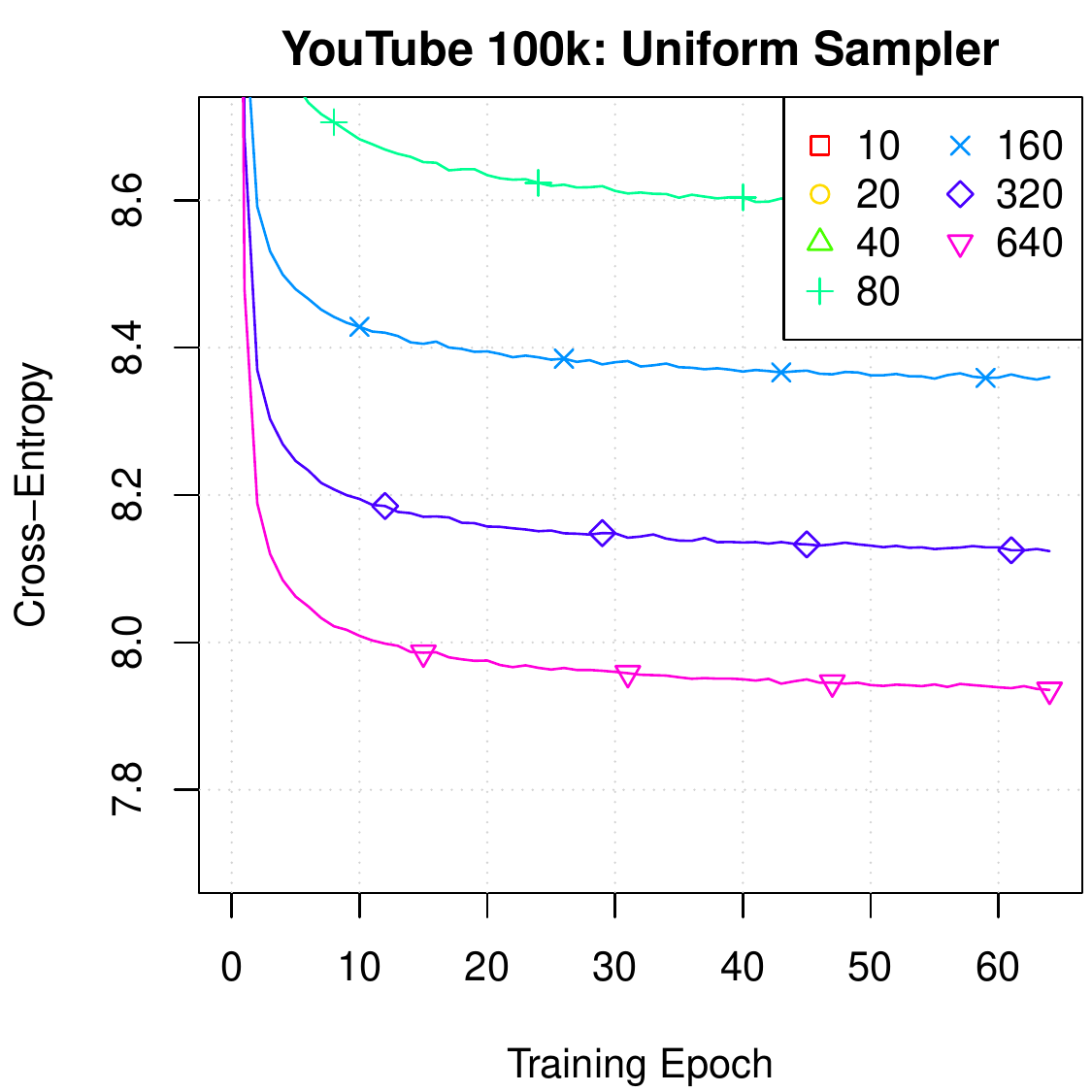}
    \includegraphics[width=0.66 \columnwidth]{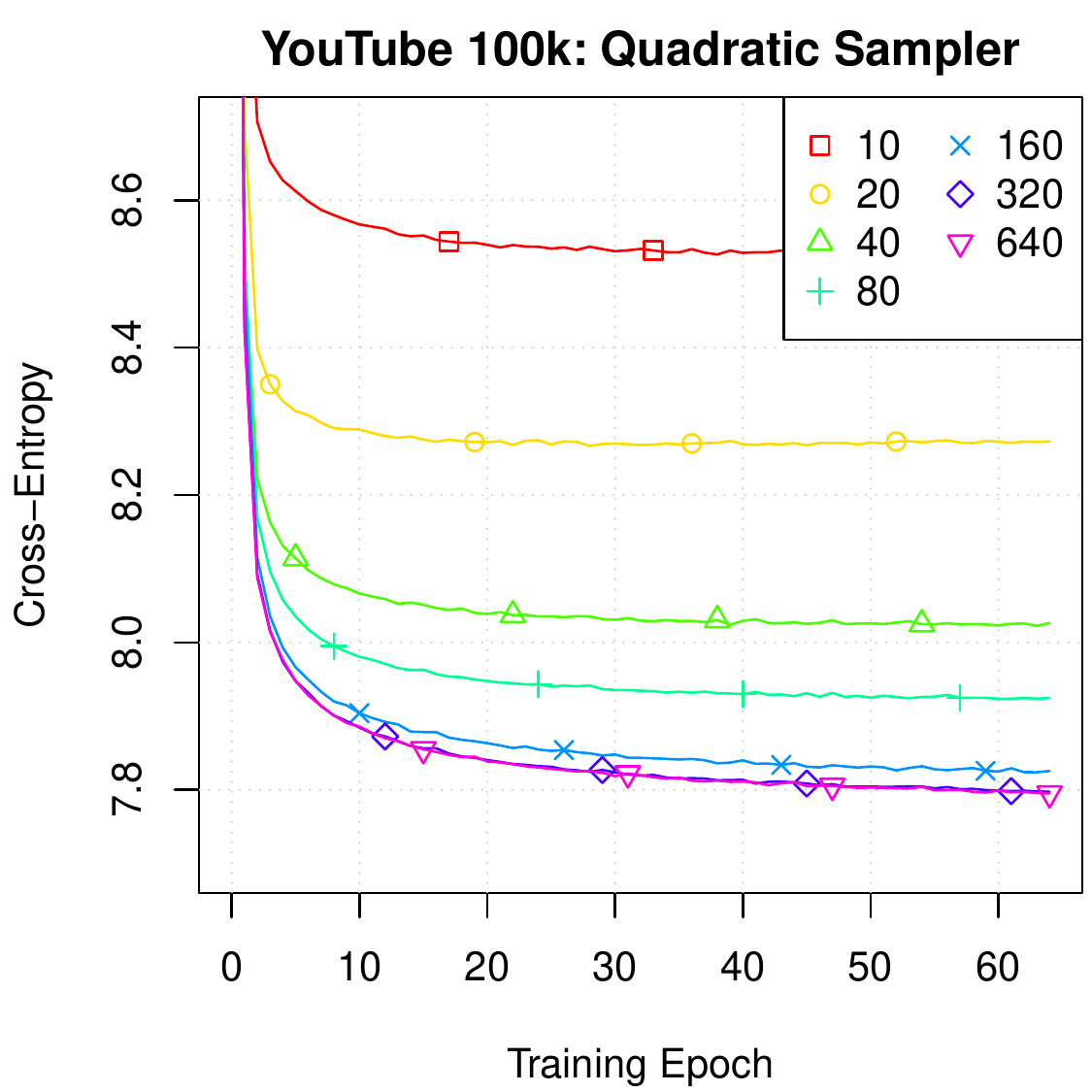}
    \includegraphics[width=0.66 \columnwidth]{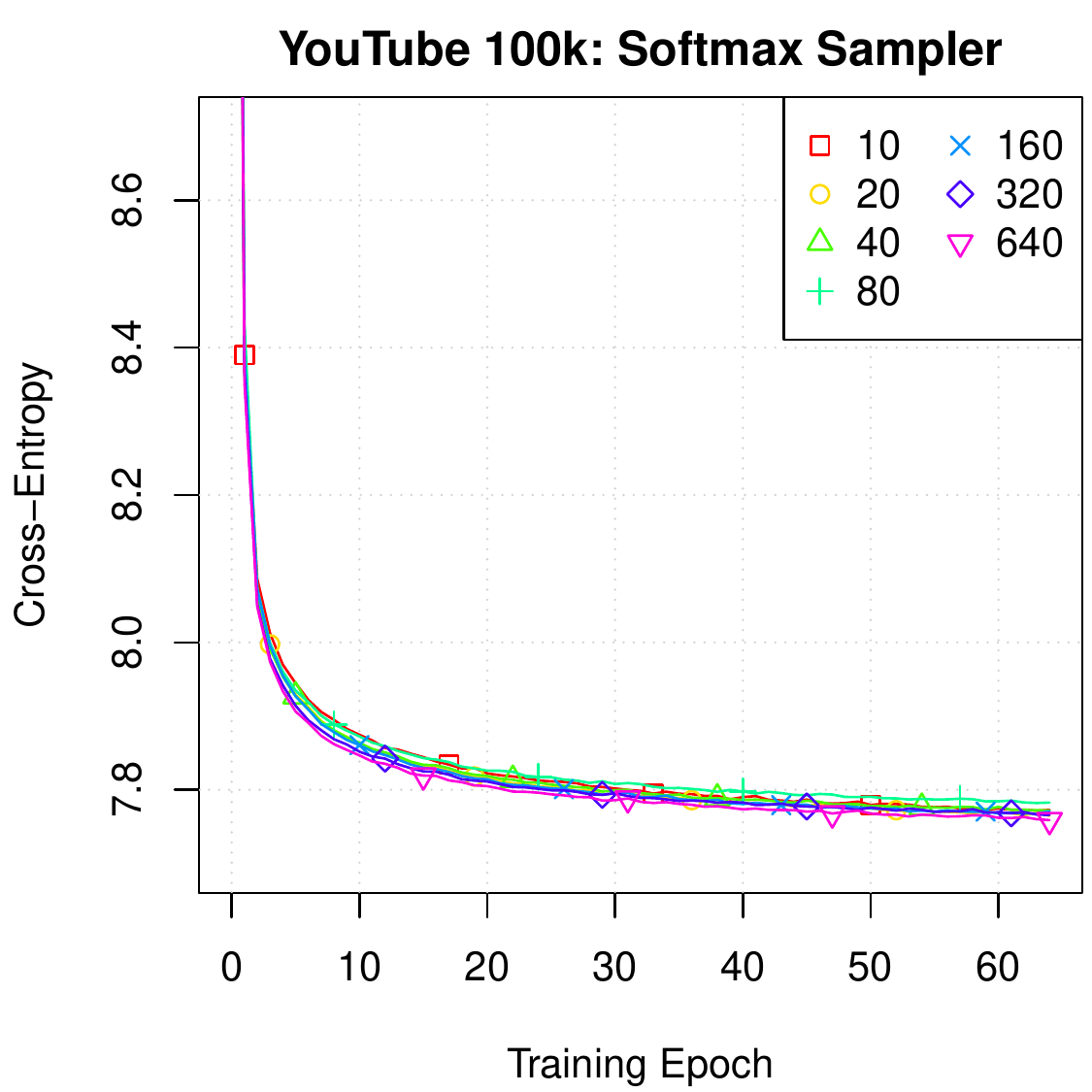}
    \caption{
	    Convergence speed for three sampling distributions (\emph{uniform}, \emph{quadratic}, \emph{softmax}) for a varying sample size $m \in \{10,20,40,\ldots\}$ on three datasets.
    Once enough samples are taken to remove the bias, adding more samples does not increase convergence speed considerably.
    }
    \label{fig:exp_num_samples_appendix}
\end{figure*}

\begin{figure*}[ht]
    \includegraphics[width=0.66\columnwidth]{fig/ptb_samples_80.pdf}
    \includegraphics[width=0.66\columnwidth]{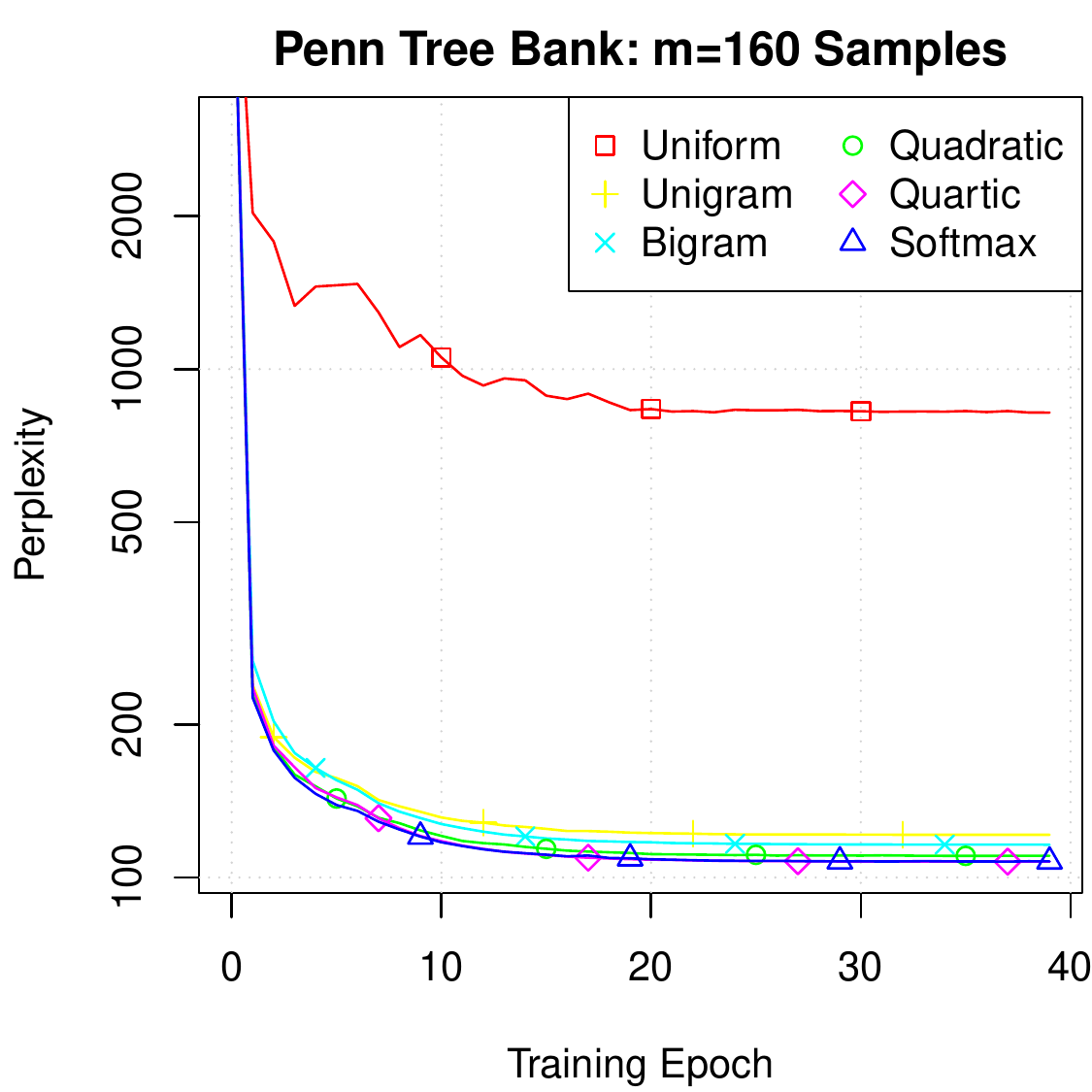}
    \includegraphics[width=0.66\columnwidth]{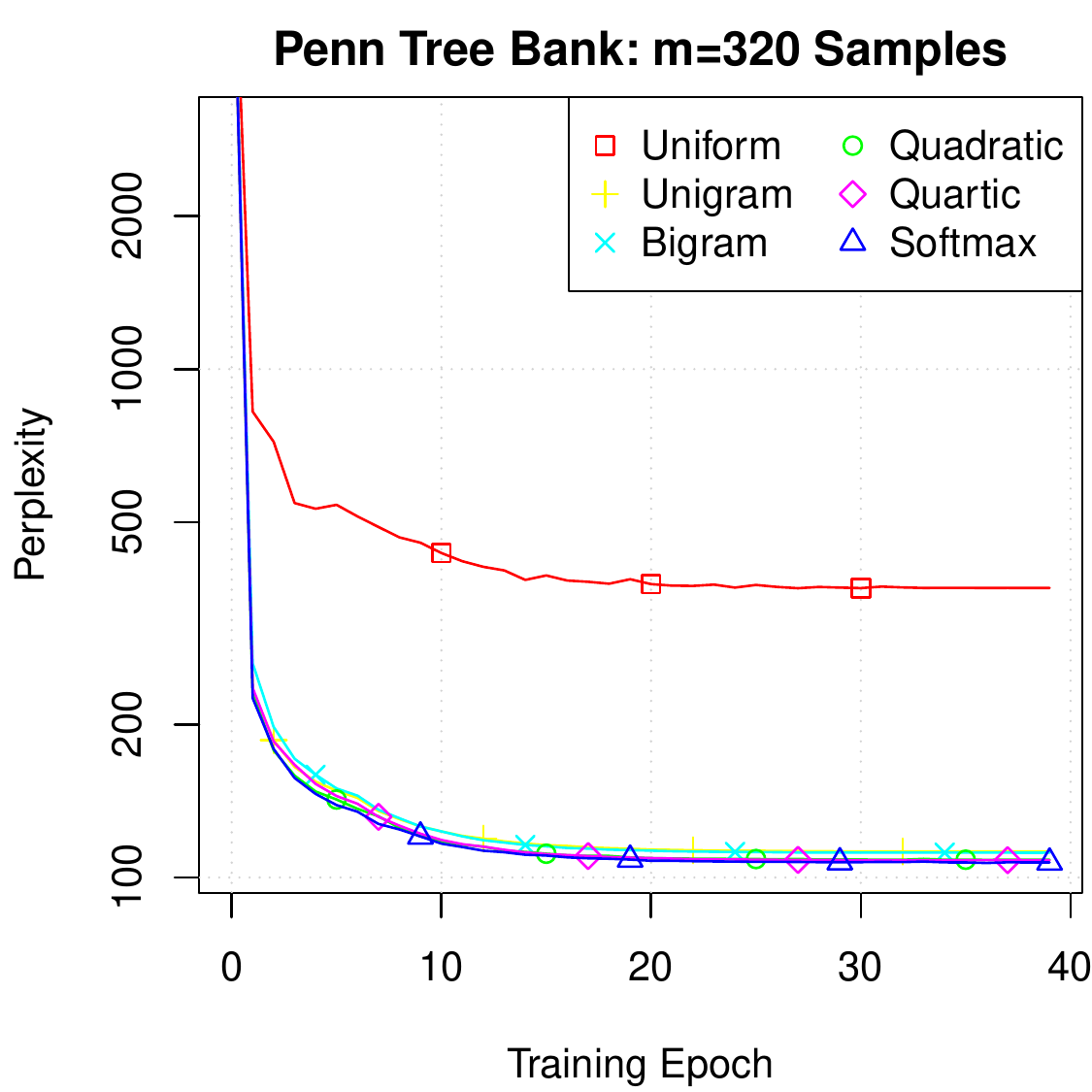}\\
  
    \includegraphics[width=0.66\columnwidth]{fig/yt10k_samples_80.pdf}
    \includegraphics[width=0.66\columnwidth]{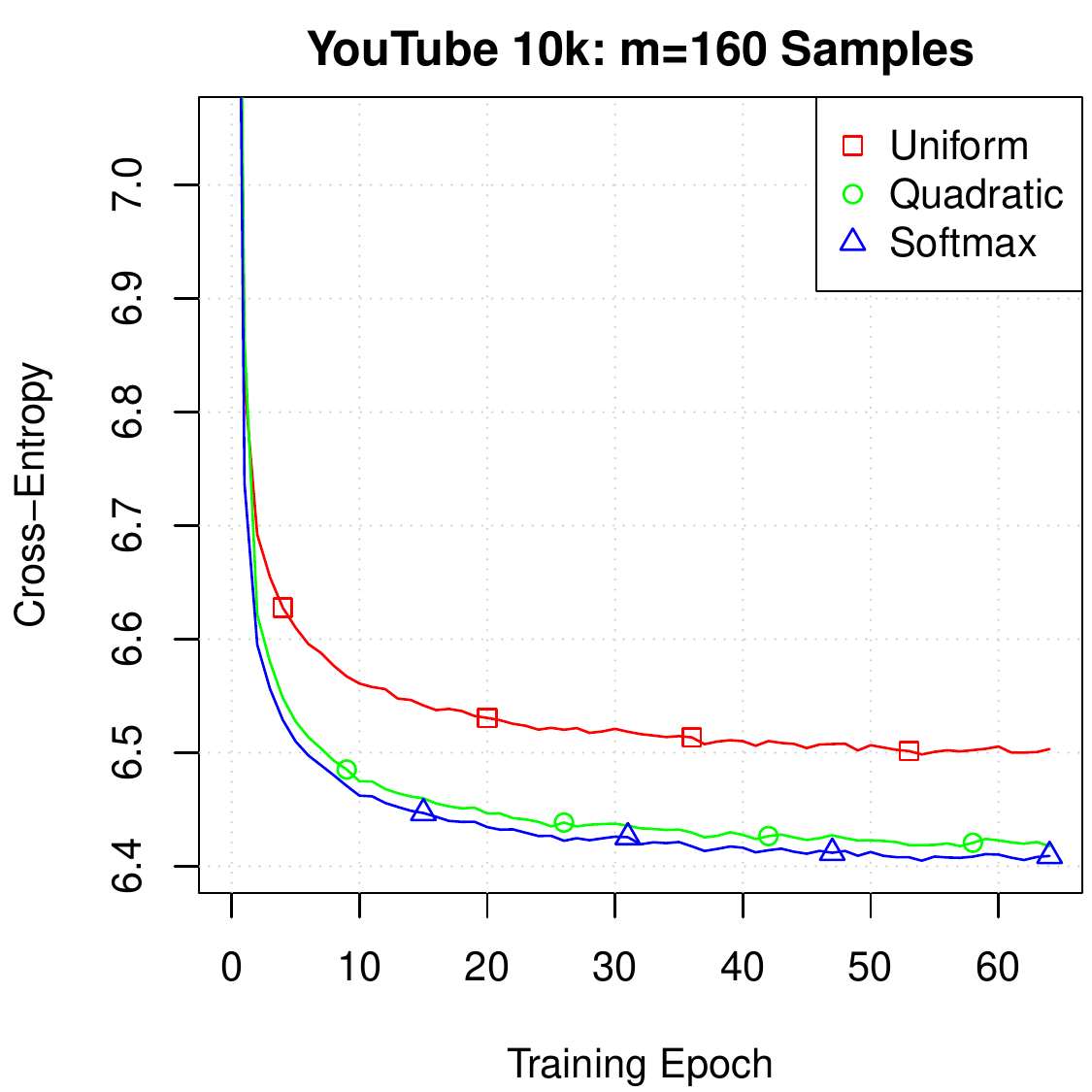}
    \includegraphics[width=0.66\columnwidth]{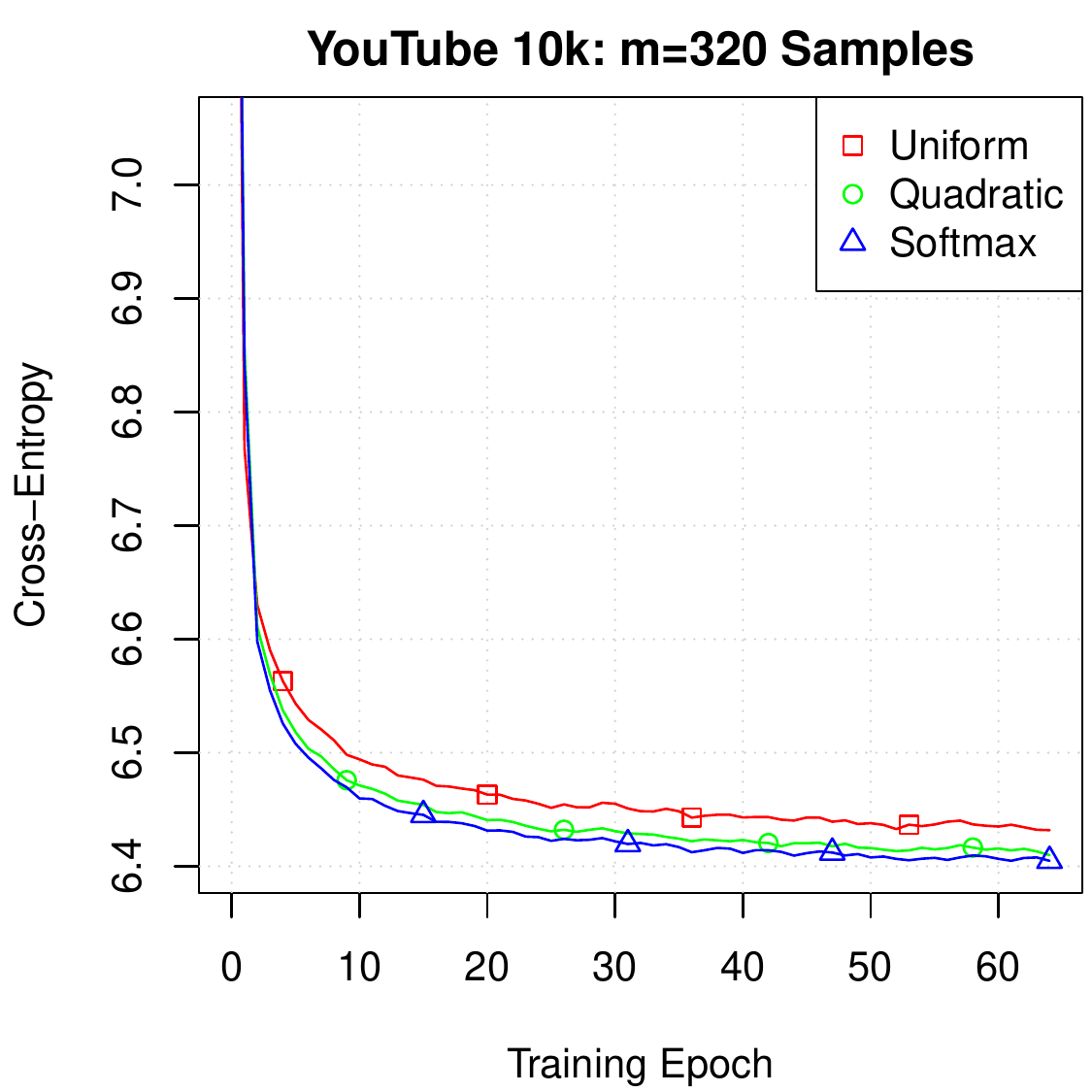}\\

    \includegraphics[width=0.66\columnwidth]{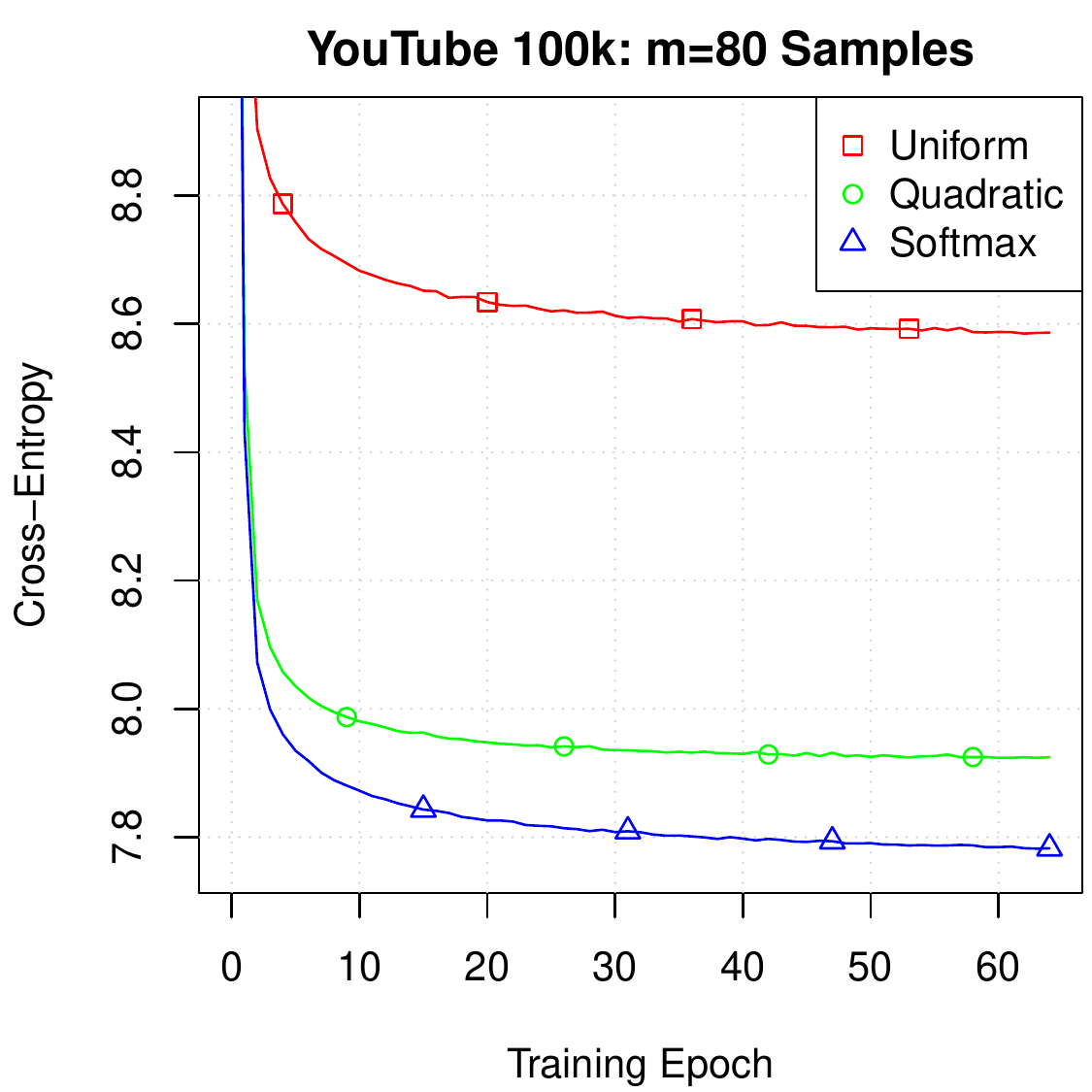}
    \includegraphics[width=0.66\columnwidth]{fig/yt100k_samples_160.pdf}
    \includegraphics[width=0.66\columnwidth]{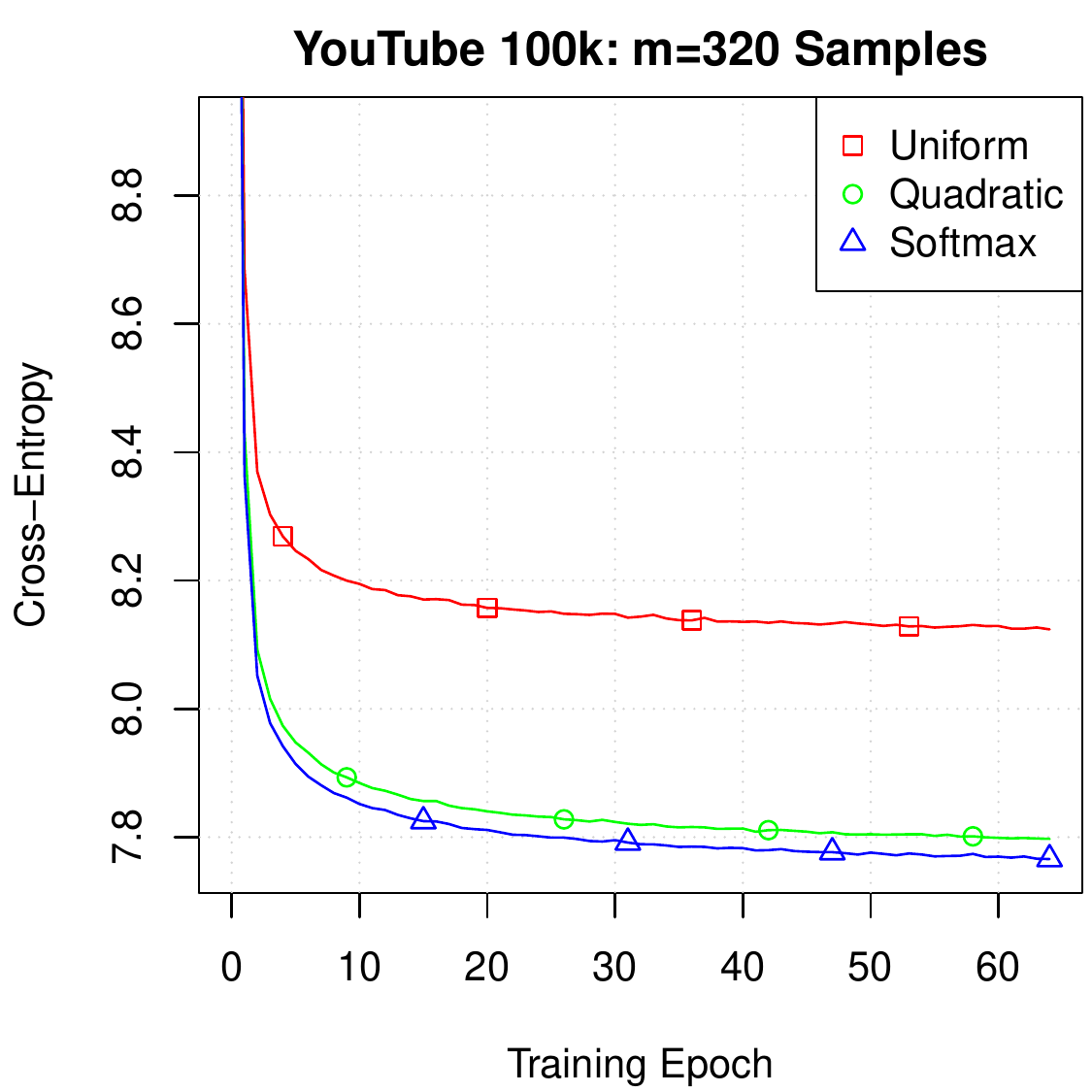}

    \caption{
    Convergence speed of different sampling distributions for a fixed sampling size.
    The convergence speed of all distributions is similar only the bias is different.
    }
    \label{fig:exp_distributions_appendix}
\end{figure*}

\end{document}